%% file: manuscript.tex
\documentclass[11pt,a4paper]{article}

\usepackage[T1]{fontenc}
\usepackage{amsmath,amssymb,amsfonts,amsthm}
\usepackage{textcomp}
\usepackage{booktabs}
\usepackage{graphicx}
\usepackage{algorithm}
\usepackage{algpseudocode}
\usepackage{xcolor}
\usepackage[colorlinks,citecolor=blue,urlcolor=blue,linkcolor=blue]{hyperref}
\usepackage{cleveref}
\usepackage{natbib}
\usepackage[margin=2.5cm]{geometry}
\usepackage{authblk}

\graphicspath{{./}}

\newtheorem{theorem}{Theorem}[section]
\newtheorem{corollary}[theorem]{Corollary}
\newtheorem{lemma}[theorem]{Lemma}
\theoremstyle{remark}
\newtheorem{remark}[theorem]{Remark}

\newcommand{\locc}{l_{\mathrm{occ}}}
\newcommand{\lfree}{l_{\mathrm{free}}}
\newcommand{\mO}{m_O}
\newcommand{\mF}{m_F}
\newcommand{\mOF}{m_{OF}}
\newcommand{\mOFmin}{m_{OF,\min}}
\newcommand{\mFobs}{m_{F,\mathrm{obs}}}
\newcommand{\BetP}{\mathrm{BetP}}
\newcommand{\DS}{\text{DS}}

\begin{document}

\title{Equivalence and Divergence of Bayesian Log-Odds and Dempster's Combination Rule for 2D Occupancy Grids}

\author[1]{Tatiana Berlenko}
\author[1,2]{Kirill Krinkin\thanks{Corresponding author: \texttt{kirill@krinkin.com}}}
\affil[1]{Constructor University, Bremen, Germany}
\affil[2]{JetBrains LTD}

\date{}
\maketitle

\begin{abstract}
Prior comparisons of Dempster's combination rule and Bayesian log-odds
for occupancy grid mapping have not controlled for sensor model
equivalence at the per-observation level, making it impossible to
attribute observed performance differences to the fusion rule rather
than to the sensor parameterization. We demonstrate that this
methodological gap produces large confounds: in our own multi-robot
experiment, unmatched parameterization reversed the boundary sharpness
comparison from $+12\%$ in favor of belief functions to $-22\%$
in favor of Bayesian log-odds.

We introduce a pignistic-transform-based methodology that matches
per-observation decision probabilities across frameworks, enabling
controlled comparison of Bayesian log-odds, Dempster's combination
rule, and Yager's rule in simulation (single-agent, multi-robot),
across sensor parameters and clamping limits, and on two real indoor
lidar datasets (Intel Research Lab, Freiburg Building~079).
Real-data multi-robot conditions use scan-splitting with ideal
alignment rather than true multi-robot deployment.

Under fair comparison with pignistic ($\BetP$) matching, Bayesian
fusion produces better point probabilities than Dempster's rule:
small absolute differences ($0.001$--$0.022$ on $[0,1]$ scales across
simulation and real data) are practically small.
Bayesian log-odds is directionally favored in 15/15 per-metric
simulation comparisons ($p = 3.1 \times 10^{-5}$); cell accuracy
differences are additionally TOST-equivalent at nominal margins.
Under normalized plausibility ($P_{\mathrm{Pl}}$) matching in
single-agent simulation, the direction reverses: DS achieves better
boundary sharpness (15/15) and Brier score (15/15), confirming that
the comparison is matching-criterion-specific. An $L_{\max}$ ablation
confirms the Bayesian advantage under $\BetP$ matching arises from
Dempster's $1/(1-K)$ conflict normalization, not from regularization.
A downstream A* path-planning evaluation confirms functionally
identical navigation outcomes. Previously reported Dempster--Shafer
advantages may be attributable, at least in part, to unmatched sensor
model parameterization.

The pignistic transform matching methodology is reusable for any
future comparison of Bayesian and belief function fusion. These
conclusions are restricted to point-probability metrics; the
interval-valued representation $[\mathrm{Bel}(A), \mathrm{Pl}(A)]$
unique to belief functions is not evaluated here.
\end{abstract}

\noindent\textbf{Keywords:} occupancy grid mapping, Dempster--Shafer theory, Bayesian fusion, belief functions, pignistic transform, multi-sensor fusion, sensor model equivalence

\vspace{1em}

\input{introduction}
\input{background}
\input{equivalence}
\input{methodology}
\input{results}
\input{discussion}
\input{conclusion}

\input{appendix}

\bibliographystyle{plainnat}
\bibliography{references}

\end{document}


\title{Supplementary Material: Equivalence and Divergence of Bayesian Log-Odds and Dempster's Combination Rule for 2D Occupancy Grids}
\author{Tatiana Berlenko, Kirill Krinkin}
\date{}
\maketitle

\setcounter{section}{0}
\renewcommand{\thesection}{S\arabic{section}}
\renewcommand{\thesubsection}{S\arabic{section}.\arabic{subsection}}
\renewcommand{\theequation}{S\arabic{equation}}
\renewcommand{\thetable}{S\arabic{table}}

\section{Statistical Methodology}
\label{app:stats}

This supplement presents the complete statistical analysis framework
summarized in the main text.

\subsection{Directional Analysis}
\label{sec:directional}

As a distribution-free primary measure of effect reliability, we report
$k/N$: the number of runs in which the Bayesian arm achieves a
higher (or lower, for Brier score and entropy) metric value than the
belief function arm. This sign-test-like statistic requires no
assumptions about effect size distributions and is robust to outliers.

\subsection{Equivalence Testing (TOST)}
\label{sec:tost}

We apply the Two One-Sided Tests (TOST) procedure
\cite{schuirmann1987comparison,lakens2017equivalence} to assess whether
performance differences fall within practically negligible bounds.
TOST rejects the null hypothesis of non-equivalence when the
$(1 - 2\alpha)$ confidence interval for the mean paired difference
$\bar{d} = \overline{x - y}$ falls entirely within $(-\delta, +\delta)$,
where $\delta$ is the equivalence margin.

Rather than reporting equivalence at a single pre-specified margin, we
present a \textbf{sensitivity analysis} as the primary TOST result: for
each metric--condition pair, we sweep the margin from $0.5\times$ to
$2\times$ the nominal value and report the smallest margin at which
equivalence holds and the breakpoint at which it fails. This approach
provides strictly more information than a single-margin test and avoids
dependence on a potentially arbitrary margin choice.

As secondary support, we provide a practical argument for the nominal
margins: at $0.1$\,m grid resolution, a cell accuracy margin of
$\delta = 0.02$ corresponds to approximately 2~misclassified cells per
100~observed cells, which is negligible for downstream path planning
and obstacle avoidance at standard navigation tolerances.

\subsection{Effect Sizes}
\label{sec:effect-sizes}

Paired Cohen's $d$ is computed as $d = \bar{d} / s_d$, where $s_d$ is
the standard deviation of the paired differences. Confidence intervals
use the Hedges--Olkin \cite{hedges1985statistical} variance
approximation:
\begin{equation}
  \widehat{\mathrm{Var}}(d) = \frac{1}{n} + \frac{d^2}{2(n-1)},
  \label{eq:hedges-olkin-var}
\end{equation}
with 95\% CIs constructed via $d \pm t_{0.975, n-1} \cdot
\sqrt{\widehat{\mathrm{Var}}(d)}$.

For verification, we also compute exact confidence intervals via
non-central $t$ distribution inversion \cite{steiger2004beyond}: for
each observed $t$-statistic, the non-centrality parameters
$\delta_{\mathrm{lo}}$ and $\delta_{\mathrm{hi}}$ satisfying
$P(T > t_{\mathrm{obs}} \mid \delta_{\mathrm{lo}}) = \alpha/2$ and
$P(T < t_{\mathrm{obs}} \mid \delta_{\mathrm{hi}}) = \alpha/2$ are
found numerically, yielding exact CI bounds $d = \delta / \sqrt{n}$.
Across all nine comparisons, the maximum absolute discrepancy between
the Hedges--Olkin and non-central $t$ CI bounds is $0.59$, which is
less than 4\% of the corresponding point estimate. All qualitative
conclusions are identical under both methods.

\paragraph{Interpreting large $d$ values}
Cohen's \cite{cohen1988statistical} benchmarks ($d = 0.2$ small, $0.5$
medium, $0.8$ large) were calibrated for human behavioral studies with
high individual variability. In computational experiments with
controlled environments and fixed random seeds, within-run variance
approaches zero as the simulation becomes more deterministic, inflating
$d$ without limit. The effect sizes reported in this study ($d = 4.6$
to $14.9$) reflect very low within-method variance (standard deviation
of paired differences $s_d = 0.0001$--$0.012$), not large absolute
differences. The raw mean differences are $0.001$--$0.070$ on $[0,1]$
scales. We report both raw differences and $d$ values throughout, and
note that the $k/N$ directional consistency provides a more intuitive
measure of effect reliability in this setting.

\subsection{Multiplicity Correction}
\label{sec:multiplicity}

The nine simulation TOST comparisons (3~metrics $\times$ 3~conditions:
single-agent, dynamic baseline, noisy sensor) are
corrected as a single family using the Holm--Bonferroni step-down
procedure \cite{holm1979simple}, which controls the family-wise error
rate while being uniformly more powerful than the Bonferroni correction.
This is a conservative choice: correcting per-experiment (family size~3)
yields identical conclusions for all comparisons.

\subsection{Real-Data Statistical Analysis}
\label{sec:real-data-stats}

The real-data experiments (Intel Lab, Freiburg~079) produce a single
map per condition rather than independent replications. We therefore
use a \emph{spatial block bootstrap} to construct 95\% confidence
intervals that account for spatial autocorrelation in occupancy grids.

The grid is partitioned into non-overlapping $B \times B$ cell blocks
(default $B = 10$ cells $= 1.0$\,m at $0.1$\,m resolution). For each
metric, only blocks containing at least one evaluation cell are
retained. At each of $10{,}000$ bootstrap iterations, blocks are
resampled with replacement; the per-cell metric differences from all
cells in the resampled blocks are aggregated to compute the mean delta.
The 95\% percentile CI is then extracted from the bootstrap
distribution.

The block size of $1.0$\,m is chosen to exceed the grid cell size by a
factor of $10\times$, capturing local spatial dependence from
overlapping lidar rays. The autocorrelation range is expected to be less
than $2.0$\,m for indoor lidar grids at $0.1$\,m resolution, based on
the beam footprint ($0.1$--$0.5$\,m at typical indoor ranges) and the
resulting local correlation structure. A
sensitivity analysis over block sizes $B \in \{5, 10, 20\}$
($0.5$\,m, $1.0$\,m, $2.0$\,m) confirms that CI widths and
significance verdicts are stable across this range
(\cref{sec:appendix-block-sensitivity}).

\subsection{Bayes Factors}
\label{sec:appendix-bf}

As a complementary Bayesian measure, we compute $\mathrm{BF}_{01}$ for
each comparison using a conjugate normal model with a weakly
informative prior $\mu_d \sim \mathcal{N}(0, \delta^2)$ centered at
zero with scale equal to the equivalence margin. $\mathrm{BF}_{01} > 1$
favors equivalence; values exceeding $10^6$ indicate overwhelming
posterior concentration within the equivalence region.

Bayesian equivalence testing yields $\mathrm{BF}_{01} > 10^6$ for all
five TOST-equivalent comparisons, indicating overwhelming posterior
concentration within the equivalence region. The four non-equivalent
comparisons yield $\mathrm{BF}_{01} \approx 0$, indicating strong
evidence against equivalence. The BF results are fully consistent with
the frequentist TOST verdicts.
The prior scale equals the TOST margin, coupling the two analyses;
however, the overwhelming magnitude of $\mathrm{BF}_{01}$ ($> 10^6$)
ensures robustness to any reasonable prior specification.

\subsection{CI Verification}
\label{sec:appendix-ci}

\Cref{tab:ci-verification} reports the
verification of Cohen's $d$ confidence intervals: the
Hedges--Olkin approximation~\cite{hedges1985statistical}
used in Table~2 of the main paper is compared against exact
non-central $t$ distribution CIs~\cite{steiger2004beyond} for all nine
simulation comparisons. The maximum absolute discrepancy is $0.59$,
occurring for the single-agent Brier score comparison ($d = -14.93$,
the largest effect size; less than 4\% of the point estimate).
All qualitative conclusions are identical under both methods.

\input{table_ci_verification}

\subsection{Block Size Sensitivity for Spatial Block Bootstrap}
\label{sec:appendix-block-sensitivity}

\Cref{tab:block-sensitivity} reports CIs at $B \in \{5,
10, 20\}$ ($0.5$\,m, $1.0$\,m, $2.0$\,m) for both real-data
datasets. All significance verdicts are stable across block
sizes: no comparison changes from significant to non-significant or
vice versa.

\input{table_block_sensitivity}

\section{Supporting Proofs}
\label{app:proofs}

\subsection{Proof of Closed-Form Accumulation Lemma}

Under the conditions stated in the main text (vacuous prior, $N$
identical consonant occupied observations with
$m_{\mathrm{obs}} = (a,\, 0,\, 1-a)$):

\begin{proof}
By induction on $N$. The base case $N = 1$ is immediate from the
vacuous identity property. For the inductive step, assume
$\mO^{(N)} = 1 - (1-a)^N$ and $\mOF^{(N)} = (1-a)^N$
hold at step~$N$. Since $\mF^{(N)} = 0$ and
$\mFobs = 0$, the conflict is $K = 0$.
Applying Dempster's combination:
\begin{align*}
  \mO^{(N+1)}
    &= \mO^{(N)} \cdot a + \mO^{(N)} \cdot (1-a)
       + (1-a)^N \cdot a \\
    &= \mO^{(N)} + a(1-a)^N \\
    &= 1 - (1-a)^N + a(1-a)^N
     = 1 - (1-a)^{N+1}.
\end{align*}
Similarly,
$\mOF^{(N+1)} = (1-a)^N \cdot (1-a) / 1 = (1-a)^{N+1}$.
\end{proof}

\subsection{Multi-Observation Divergence}
\label{app:divergence}

The single-observation equivalence holds at each individual sensor
reading. Under repeated observations, the two accumulation mechanisms
diverge:
\begin{itemize}
  \item \textbf{Bayesian}: log-odds accumulate additively
    and are clamped at $|L| \leq L_{\max}$, yielding
    $L^{(N)} = \min(N \cdot \locc,\; L_{\max})$ for identical
    occupied observations.
  \item \textbf{Dempster--Shafer}: masses combine multiplicatively,
    with conflict normalization by $1/(1-K)$ when observations disagree.
\end{itemize}
The divergence arises from two distinct sources: (a)~the Bayesian
clamp imposes an artificial ceiling on confidence, while Dempster's
rule converges asymptotically to certainty without bound; and
(b)~Dempster's conflict normalization factor $1/(1-K)$ redistributes
mass in a manner that has no Bayesian analogue.

\section{Convergence Analysis}
\label{app:convergence}

\Cref{tab:convergence} compares the two accumulation trajectories
for $N$ identical occupied observations.

\begin{table}[t]
\centering
\caption{Convergence comparison for $N$ identical occupied observations
  ($\locc = 2.0$, $L_{\max} = 10$, $a = 0.7616$). At $N = 1$ the
  methods are identical by construction (pignistic matching). The
  Bayesian clamp saturates at $N = 5$; Dempster--Shafer overtakes at
  $N \approx 7$.}
\label{tab:convergence}
\begin{tabular}{@{}rcccccc@{}}
\toprule
$N$ & $L$ & $p(O)$ & $\mO$ & $\mOF$ & $\BetP(O)$ & $\BetP - p$ \\
\midrule
 1  &  2.0 & 0.8808 & 0.7616 & 0.2384  & 0.8808 & $+$0.0000 \\
 2  &  4.0 & 0.9820 & 0.9432 & 0.0568  & 0.9716 & $-$0.0104 \\
 5  & 10.0 & 0.9\hspace{-.2pt}99955 & 0.9992 & 0.0008  & 0.9996 & $-$0.0003 \\
10  & 10.0 & 0.9\hspace{-.2pt}99955 & $\approx 1$ & $< 10^{-6}$
    & $\approx 1$ & $+$0.00005 \\
20  & 10.0 & 0.9\hspace{-.2pt}99955 & $\approx 1$ & $< 10^{-12}$
    & $\approx 1$ & $+$0.00005 \\
\bottomrule
\end{tabular}
\end{table}

The methods agree at $N = 1$ (by construction) and then diverge: Bayesian
leads at $N = 2$--$5$ due to faster initial accumulation, clamps at
$N = 5$, and is overtaken by the Dempster--Shafer trajectory at
approximately $N = 7$.

\section{Ablation Studies}
\label{app:ablations}

\subsection{$L_{\max}$ Ablation}
\label{app:lmax}

We repeat the single-agent experiment at four clamping levels:
$L_{\max} \in \{5, 10, 20, \infty\}$, with 15 runs per configuration.

\begin{table}[t]
\centering
\caption{$L_{\max}$ ablation: Bayesian+count minus Dempster delta
(15 runs each). Higher is better for cell accuracy and boundary
sharpness; lower (more negative) is better for Brier score.}
\label{tab:lmax-ablation}
\begin{tabular}{lccccc}
\toprule
Metric & $L_{\max}=5$ & $L_{\max}=10$ & $L_{\max}=20$ & $L_{\max}=\infty$ & Consist. \\
\midrule
Cell accuracy      & $+0.0013$ & $+0.0013$ & $+0.0013$ & $+0.0013$ & 15/15 \\
Boundary sharpness & $+0.007$  & $+0.010$  & $+0.010$  & $+0.010$  & 15/15 \\
Brier score        & $-0.006$  & $-0.006$  & $-0.006$  & $-0.006$  & 15/15 \\
\bottomrule
\end{tabular}
\end{table}

The Bayesian advantage does not disappear at $L_{\max} = \infty$:
the gap persists with 15/15 directional consistency for all metrics.
Cell accuracy $\Delta$ is invariant ($+0.0013$) because it is
measured at the $p = 0.5$ classification threshold, far from the
saturation regime. Boundary sharpness $\Delta$ increases slightly
from $+0.007$ ($L_{\max} = 5$) to $+0.010$ ($L_{\max} = \infty$).

\subsection{Yager's Rule Comparison}
\label{app:yager}

Yager's combination rule~\cite{yager1987dempster} assigns conflict mass
to total ignorance ($\mOF$) rather than normalizing it away.
In simulation, Yager performs worse than Dempster on
boundary sharpness ($-24\%$ to $-31\%$ relative) because
conflict-to-ignorance transfer preserves excessive uncertainty at
boundary cells. Yager slightly outperforms Dempster on cell accuracy
($+0.025\%$) but performs worse on Brier score.

On Intel Lab data, Yager shows better Brier
score than Bayesian in the 1-source condition, and better cell
accuracy than Dempster in scan-split configurations. However, Yager
consistently produces the lowest boundary sharpness of all four arms.

\subsection{DS Regularization Control}
\label{app:ds-reg}

We introduce a minimum ignorance mass floor $\mOFmin$ for Dempster's
rule and test $\mOFmin \in \{0.001, 0.01, 0.05\}$.

The regularization does not close the gap. On cell accuracy, the gap
narrows modestly at $\mOFmin = 0.05$ ($6.9\%$ reduction). On
boundary sharpness, the gap \emph{widens}: at $\mOFmin = 0.05$,
the disadvantage more than doubles ($-116\%$ change).

We note that $\mOFmin$ is one possible DS regularization but not
necessarily the closest functional analog of $L_{\max}$ clamping.
However, combined with the $L_{\max} = \infty$ ablation, the evidence
supports a non-regularization explanation.

\subsection{Sensor Parameter Sensitivity}
\label{app:sensor-sensitivity}

We test four sensor parameterizations: weak
($\locc = 1.0$, $\lfree = -0.3$), default ($\locc = 2.0$,
$\lfree = -0.5$), strong ($\locc = 4.0$, $\lfree = -1.0$), and
symmetric ($\locc = 1.5$, $\lfree = -1.5$). Each configuration uses
pignistic-transform-matched masses.

The Bayesian advantage is consistent across all four configurations.
Cell accuracy advantage ranges from $+0.007\%$ (symmetric) to
$+0.21\%$ (strong). Boundary sharpness advantage ranges from $+0.2\%$
(symmetric) to $+1.1\%$ (weak). The advantage is smallest for the
symmetric sensor ($|\locc| = |\lfree|$).

\section{Prior-Work Detailed Comparison}
\label{app:prior-work}

In the credibilist grid
literature~\cite{moras2011moving,moras2011credibilist}, the Bayesian
log-odds parameters $(\locc,\, \lfree)$ and the belief function
masses $(\mO,\, \mF,\, \mOF)$ are set by the practitioner for each
framework separately. When these parameters are not formally linked,
differences in the \emph{aggressiveness} of the per-observation
update can dominate the comparison, masking or even reversing the
effect of the fusion rule itself.

The vinySLAM comparison with tinySLAM~\cite{huletski2017vinyslam}
illustrates a related confound: the TBM cell model and scan matching
cost function were changed simultaneously without ablation. The
reported accuracy improvement cannot be attributed to the belief
function representation alone.

Nuss et al.~\cite{nuss2018random} compared their RFS-based evidential
grid against a standard Bayesian baseline, but the comparison employed
different observation models for each framework.

The common pattern is the absence of a formal matching criterion that
ensures identical per-observation decision probabilities.

\bibliographystyle{plainnat}
\bibliography{references}

%% file: introduction.tex

\section{Introduction}
\label{sec:introduction}

Occupancy grid mapping is a foundational capability in mobile robotics,
providing a spatial representation that supports path planning, obstacle
avoidance, and autonomous exploration. In the standard formulation, the
environment is discretized into a regular grid of cells, each assigned a
probability of being occupied based on accumulated sensor observations.
This representation underpins widely deployed open-source Simultaneous Localization and Mapping (SLAM) systems
including GMapping~\cite{grisetti2007improved}, Google
Cartographer~\cite{hess2016cartographer}, and
SLAM Toolbox~\cite{macenski2021slam}, all of
which use Bayesian log-odds fusion as the default grid update mechanism.

Two principled frameworks exist for fusing sensor observations into
occupancy grid cells. The \emph{Bayesian} approach, introduced by
Elfes~\cite{elfes1989using} and refined by Thrun et
al.~\cite{thrun2005probabilistic}, represents each cell's state as a
log-odds value that is incrementally updated with each observation.
The approach is computationally efficient, mathematically transparent,
and widely adopted. The \emph{belief function} approach, grounded in
Dempster--Shafer theory~\cite{dempster1967upper,shafer1976mathematical}
and its decision-theoretic extension, the Transferable Belief
Model~\cite{smets1994transferable,smets2005decision}, represents each
cell's state as a mass function over the frame of discernment
$\{$occupied, free$\}$ and its power set. Belief function fusion has
been applied to occupancy grids with reported advantages in dynamic
environment detection through conflict analysis~\cite{moras2011moving},
SLAM with evidential reasoning~\cite{huletski2017vinyslam}, and random
finite set formulations for dynamic occupancy
grids~\cite{nuss2018random}. A comprehensive geometric treatment of
the relationship between belief functions and probability distributions
is provided by Cuzzolin~\cite{cuzzolin2020geometry}.

Despite both frameworks being well-established, a controlled comparison
that isolates the effect of the fusion rule from the sensor model
parameterization has not been conducted. Prior studies comparing
Bayesian and belief function occupancy grids have typically assigned
sensor parameters independently for each framework: log-odds increments
$(\locc, \lfree)$ for the Bayesian arm and mass assignments
$(\mO, \mF, \mOF)$ for the belief function arm. When these parameters
are not formally linked to produce equivalent per-observation decision
probabilities, differences in the aggressiveness of the single-observation
update can dominate the comparison, masking or even reversing the
effect of the accumulation rule. This is a methodological gap, not an
accusation of error: the sensor model equivalence problem was not
previously recognized as a significant confound for occupancy grid
comparisons.

In this paper, we set out to determine whether belief function fusion
offers measurable advantages over Bayesian fusion for 2D occupancy
grids when the comparison is controlled for sensor model equivalence.
We develop a formal framework that connects the two representations,
derive a fair comparison methodology, and validate the resulting
comparison across simulation and real-data conditions. Our specific
contributions are:

\begin{enumerate}
  \item \textbf{Fair comparison methodology.} We develop a
    pignistic-transform-based methodology for matching Bayesian and
    belief function sensor models at the per-observation level, enabling
    controlled comparison of fusion rules independent of sensor
    parameterization. This methodology is reusable by the community for
    future comparisons (\cref{sec:fair-comparison}).

  \item \textbf{Combination rule mechanism analysis.} Through an
    $L_{\max}$ ablation ($L_{\max} \in \{5, 10, 20, \infty\}$), a
    Yager's rule comparison, and a DS regularization control
    ($\mOFmin$), we demonstrate that the Bayesian advantage
    persists when all regularization is removed ($L_{\max} = \infty$)
    and cannot be closed by analogous DS regularization. This
    establishes the advantage as arising primarily from Dempster's conflict
    normalization and the slower geometric convergence of belief masses
    at boundary cells, not a regularization artifact
    (\cref{sec:clamping-tradeoff}).

  \item \textbf{Comprehensive empirical validation.} We compare four
    fusion rules (Bayesian, Bayesian with observation count, Dempster's
    rule, Yager's rule) across a sweep of $L_{\max}$ values, four
    sensor parameterizations, single-agent and multi-robot simulation,
    two real indoor lidar datasets (Intel Research Lab and Freiburg
    Building~079), and a downstream A* path-planning evaluation,
    demonstrating consistent Bayesian advantage under $\BetP$ matching
    across all simulation conditions (15/15 directional consistency per
    metric per condition) and on both real datasets for cell accuracy
    and Brier score; under $P_{\mathrm{Pl}}$ matching, the direction
    reverses (\cref{sec:results-ppl}). Boundary sharpness shows mixed
    results on Intel Lab, attributed to clamping artifacts confirmed
    by the $L_{\max}$ ablation (\cref{sec:results}).
\end{enumerate}

We restrict our analysis to 2D binary occupancy grids with lidar-type
sensors. The theoretical results (Equivalence Theorem, $\mOF$ decay
proof) are dimension-independent, but the experimental validation is
restricted to 2D indoor environments. We test two combination rules
(Dempster's rule and Yager's rule) to isolate the role of conflict
normalization; extension to 3D grids, multi-hypothesis frames, and
additional combination rules (PCR6, Dubois--Prade) is discussed as
future work.

\Cref{sec:background,sec:equivalence} review background
and present the theoretical analysis.
\Cref{sec:fair-comparison} develops the fair comparison
methodology. \Cref{sec:experimental-setup,sec:results}
describe the experimental setup and results.
\Cref{sec:discussion} discusses mechanisms and practical
recommendations. \Cref{sec:conclusion} summarizes conclusions
and future directions.

%% file: background.tex

\section{Related Work}
\label{sec:background}

Occupancy grid mapping has been approached from two complementary
theoretical perspectives: Bayesian probability theory and
Dempster--Shafer belief function theory. This section reviews the key
developments in each line of work and identifies a methodological gap
in prior comparison studies that motivates the present investigation.

\subsection{Bayesian Occupancy Grid Mapping}
\label{sec:bayesian-ogm}

Elfes~\cite{elfes1989using} introduced the occupancy grid as a
spatial representation for mobile robot perception, in which each cell
stores an independent occupancy probability updated from sensor
observations via Bayes' rule. Thrun et
al.~\cite{thrun2005probabilistic} systematised the approach within the
broader framework of probabilistic robotics, introducing the log-odds
representation $L = \log(p/(1-p))$ that converts multiplicative
Bayesian updates into efficient additive operations. This formulation,
combined with clamping of accumulated log-odds to a symmetric bound
$|L| \leq L_{\max}$ for numerical stability, has become the standard
implementation in the robotics community.

The Bayesian log-odds approach is the default in widely deployed
open-source SLAM systems, including GMapping~\cite{grisetti2007improved}
(Rao-Blackwellised particle filter SLAM), Google
Cartographer~\cite{hess2016cartographer} (submap-based graph SLAM),
and SLAM Toolbox~\cite{macenski2021slam} (the Robot Operating System (ROS)~2 navigation stack default). Its
popularity rests on simplicity (one floating-point value per cell),
computational efficiency (one addition per update), and the
well-understood Bayesian interpretation of the resulting probabilities.
The formal treatment of Bayesian occupancy grid fusion is given in
\Cref{sec:bayesian-formal}.

\subsection{Belief Function Occupancy Grid Mapping}
\label{sec:ds-tbm}

An alternative line of work applies Dempster--Shafer (DS) belief
function theory~\cite{dempster1967upper,dempster1968generalization,shafer1976mathematical}
to occupancy grid mapping, replacing the single occupancy probability
with a basic belief assignment (BBA) that explicitly represents
ignorance. Within the Transferable Belief Model
(TBM)~\cite{smets1994transferable}, each cell stores a mass triplet
$(\mO,\, \mF,\, \mOF)$ where $\mOF$ quantifies the degree of
epistemic ignorance---mass not yet committed to either hypothesis.
Decisions are made via the pignistic
transform~\cite{smets2005decision}: the pignistic probability of
occupancy is $\BetP(O) = \mO + \mOF/2$, mapping belief functions to
probabilities for action selection. Cuzzolin~\cite{cuzzolin2020geometry}
provides a modern geometric treatment of the relationship between
belief functions and probability, situating the pignistic transform
within a broader family of probability transforms on the belief space.

Moras et al.~\cite{moras2011moving,moras2011credibilist} developed
\emph{credibilist occupancy grids} for vehicle perception in dynamic
traffic environments, proposing the use of conflict mass~$K$ between
successive observations as a metric for detecting moving objects.
Their approach demonstrated that cells affected by dynamic obstacles
produce elevated conflict under Dempster's combination rule, providing
a mechanism for dynamic detection that has no direct analogue in
standard Bayesian grids.

Huletski et al.~\cite{huletski2017vinyslam} presented vinySLAM, an indoor
SLAM system that replaces the single-probability cell model of
tinySLAM with TBM-based mass triplets, targeting resource-constrained
platforms such as the Raspberry~Pi. The system demonstrated accuracy
improvements over tinySLAM on indoor benchmarks. However, vinySLAM
simultaneously modified both the occupancy cell model (from Bayesian
to TBM) and the scan matching cost function (adding
angle-histogram-based weighting). No ablation study has been published
that isolates the contribution of the TBM cell model from the
improved scan matching, leaving the source of the accuracy improvement
ambiguous (disclosure: the second author of the present work is a
co-author of~\cite{huletski2017vinyslam}).

Nuss et al.~\cite{nuss2018random} proposed a random finite set (RFS)
approach to occupancy grids that provides a principled treatment of
multi-target tracking within the belief function framework, extending
evidential occupancy modelling to dynamic multi-object scenarios.
Richter et al.~\cite{richter2021semantic} extended evidential grids to
semantic occupancy mapping for outdoor environments using monocular and
stereo cameras. Ben Ayed et al.~\cite{benayed2025overview} provide a recent
survey of evidential occupancy grid methods and their extensions.

Several advantages of the belief function approach have been reported
in this literature:
\begin{enumerate}
  \item \textbf{Dynamic object detection}: conflict analysis under
    Dempster's rule can identify cells with inconsistent temporal
    observations~\cite{moras2011moving}.
  \item \textbf{Richer uncertainty representation}: the ignorance
    mass~$\mOF$ distinguishes cells that are unobserved (high~$\mOF$)
    from cells with balanced conflicting evidence (low~$\mOF$, similar
    $\mO$ and~$\mF$), a distinction that a single probability cannot
    express~\cite{smets1994transferable}.
  \item \textbf{Interval-valued bounds}: the belief--plausibility
    interval $[\mathrm{Bel}(O),\, \mathrm{Pl}(O)]$ provides
    conservative decision bounds that may be useful for guaranteed-safe
    path planning.
\end{enumerate}
These advantages are theoretically well-motivated. The question
addressed by the present work is whether they translate to measurable
performance gains under a controlled comparison with matched sensor
model parameterization.

\paragraph{Alternative combination rules}
Dempster's rule is one of several operators for combining belief
functions. Its conflict normalization by $1/(1-K)$ has been the
subject of long-standing discussion since Zadeh's~\cite{zadeh1979note}
observation that normalization can produce counterintuitive results
under high conflict. Alternative rules address this differently:
Yager's rule~\cite{yager1987dempster} assigns conflict mass to total ignorance~$\mOF$; the
Dubois--Prade rule~\cite{dubois1988representation} redistributes conflict to the union of focal elements;
Denoeux's cautious rule~\cite{denoeux2008conjunctive} provides a
least-committed combination for non-distinct evidence; PCR
(proportional conflict redistribution) distributes conflict mass
proportionally among contributing
hypotheses~\cite{smarandache2006pcr,moras2015grid}; distributed
combination for multi-robot scenarios has been addressed
in~\cite{denoeux2021distributed}. Our analysis is specific to
Dempster's rule, which is the canonical DS combination operator and
the most widely used in the evidential occupancy grid
literature~\cite{moras2011moving,moras2011credibilist,huletski2017vinyslam}.
Whether the findings reported here generalize to alternative rules is
a question for future work. In particular, the clamping asymmetry
mechanism identified in \Cref{sec:clamping-asymmetry} depends on
Dempster's conflict normalization; rules that handle conflict
differently may not exhibit this behavior.

\subsection{Comparison Studies and the Sensor Model Gap}
\label{sec:prior-work}

Despite the growing body of work on belief function occupancy grids,
controlled comparisons between the two frameworks remain scarce, and
those that exist share a common methodological limitation: sensor model
parameters are chosen independently for each framework. In the
credibilist grid literature~\cite{moras2011moving,moras2011credibilist},
Bayesian log-odds and belief function masses are set separately; in
vinySLAM~\cite{huletski2017vinyslam}, the TBM cell model and scan matching
cost function were changed simultaneously without ablation; the
RFS-based approach~\cite{nuss2018random} uses observation models
specific to the RFS framework. In all cases, the absence of a formal
matching criterion means that observed performance differences reflect
an unknown mixture of sensor model and fusion rule effects (detailed
critique in Supplementary Material).

\paragraph{The methodological gap}
To our knowledge, no prior comparison of Bayesian and belief function
occupancy grid fusion has applied per-observation sensor model matching
via the pignistic transform. This matching---requiring that the
pignistic probability $\BetP(O)$ equal the Bayesian posterior
$\sigma(l)$ for each individual sensor observation---ensures that any
aggregate performance difference is attributable to the
multi-observation accumulation rule (additive log-odds with clamping
versus Dempster's combination with conflict normalization), not to the
single-observation sensor model. This methodology is developed in
\Cref{sec:fair-comparison} and constitutes one of the main
contributions of the present work.

We emphasize that the absence of per-observation matching in prior
work is a previously unrecognized methodological confound, not an
error by prior authors. The belief function and Bayesian communities
have traditionally parameterized their sensor models according to
framework-specific conventions, and there has been no established
procedure for cross-framework matching. The pignistic transform
provides a natural and principled bridge, but its application to
sensor model matching for controlled comparisons appears to be novel.
Customizable frameworks for evidential occupancy grids have been
proposed~\cite{porebski2020customizable}, but per-observation matching
between Bayesian and belief function sensor models has not been addressed.

%% file: equivalence.tex

\section{Theoretical Foundations}
\label{sec:equivalence}

This section establishes two formal results that hold independently of
the experimental evaluation presented in \Cref{sec:results}.
\Cref{thm:bijection} proves that the Bayesian log-odds and belief
function representations encode identical decision probabilities at
the single-observation level, formalizing a connection long recognized
informally in both communities. \Cref{thm:mof-decay} proves that the
ignorance mass~$\mOF$ decreases strictly under any informative
observation, constraining the utility of its raw magnitude as a standalone
dynamic detection metric.

\subsection{Bayesian Occupancy Grid Fusion}
\label{sec:bayesian-formal}

Each cell in a Bayesian occupancy grid maintains a log-odds value
\begin{equation}
  L = \log\frac{p}{1-p}\,,
  \label{eq:logodds-def}
\end{equation}
where $p = P(O)$ denotes the occupancy
probability~\cite{elfes1989using}. The inverse relationship is
$p = \sigma(L) \triangleq (1 + e^{-L})^{-1}$.

Given a sensor observation with log-odds increment~$\locc$ (for
occupied cells) or~$\lfree$ (for free-space cells), the Bayesian
update is additive in log-odds:
\begin{equation}
  L_{k+1} = L_k + l_{\mathrm{obs}}\,,
  \label{eq:logodds-update}
\end{equation}
where $l_{\mathrm{obs}} \in \{\locc, \lfree\}$ depends on the cell
classification produced by the inverse sensor model. Since the update
is additive, the result is independent of observation order.

To prevent numerical overflow and excessive confidence, accumulated
values are clamped to a symmetric range:
\begin{equation}
  L \leftarrow \mathrm{clamp}(L,\; -L_{\max},\; L_{\max})\,,
  \label{eq:logodds-clamp}
\end{equation}
with $L_{\max} = 10$ being a common default, corresponding to a
maximum probability $p_{\max} = \sigma(10) \approx 0.999955$.

A count-augmented representation~$(L, n)$ tracks the number of
observations~$n$ alongside the accumulated log-odds, providing a
measure of evidential support that is not encoded by~$L$ alone.

\subsection{Dempster--Shafer Belief Function Fusion}
\label{sec:ds-formal}

For binary occupancy, the frame of discernment is
$\Theta = \{O, F\}$ (occupied, free) with power set
$2^\Theta = \big\{\{O\},\, \{F\},\, \Theta\big\}$. A basic belief
assignment (BBA) is a function $m\colon 2^\Theta \to [0,1]$ with
$m(\emptyset) = 0$ and
$\sum_{A \subseteq \Theta} m(A) = 1$~\cite{shafer1976mathematical}.
We write $(\mO,\, \mF,\, \mOF)$ for
$\big(m(\{O\}),\; m(\{F\}),\; m(\Theta)\big)$, so that
$\mO + \mF + \mOF = 1$.

Dempster's combination rule~\cite{dempster1967upper,dempster1968generalization} fuses two
independent BBAs $m_1,\, m_2$ as:
\begin{equation}
  m_{12}(C)
    = \frac{1}{1-K}
      \sum_{\substack{A \cap B = C \\ A,B \subseteq \Theta}}
      m_1(A)\, m_2(B)\,,
  \quad C \neq \emptyset,
  \label{eq:dempster-general}
\end{equation}
where the conflict mass (specialised here to the binary frame, where
the only empty-intersection pairs are $\{O\} \cap \{F\}$ and
$\{F\} \cap \{O\}$) is
\begin{equation}
  K = \sum_{\substack{A \cap B = \emptyset}}
      m_1(A)\, m_2(B)
    = \mO^{(1)} \mF^{(2)} + \mF^{(1)} \mO^{(2)}\,.
  \label{eq:conflict-mass}
\end{equation}
For the binary frame, the combined masses expand to:
\begin{align}
  \mO^{(12)}
    &= \frac{\mO^{(1)}\mO^{(2)}
       + \mO^{(1)}\mOF^{(2)} + \mOF^{(1)}\mO^{(2)}}{1-K}\,,
    \label{eq:dempster-exp-mO} \\
  \mF^{(12)}
    &= \frac{\mF^{(1)}\mF^{(2)}
       + \mF^{(1)}\mOF^{(2)} + \mOF^{(1)}\mF^{(2)}}{1-K}\,,
    \label{eq:dempster-exp-mF} \\
  \mOF^{(12)}
    &= \frac{\mOF^{(1)}\mOF^{(2)}}{1-K}\,.
    \label{eq:dempster-exp-mOF}
\end{align}

To derive a decision probability from a BBA, the pignistic
transform~\cite{smets2005decision} distributes the ignorance
mass~$\mOF$ equally between singletons, following the principle of
insufficient reason within the Transferable Belief Model
(TBM)~\cite{smets1994transferable}:
\begin{equation}
  \BetP(O) = \mO + \tfrac{1}{2}\,\mOF\,.
  \label{eq:pignistic}
\end{equation}

\begin{remark}[Consonant mass functions]
\label{rem:consonant}
A single-source sensor observation produces a \emph{consonant} BBA in
which mass is assigned to either $\{O\}$ or $\{F\}$ but never both
simultaneously, so that $\mO \cdot \mF = 0$. The focal elements are
therefore nested ($\{O\} \subset \Theta$ or $\{F\} \subset \Theta$),
which is the defining property of consonance. This constraint holds
throughout sequential accumulation from a vacuous prior when all
observations agree in sign, but is violated after the first pair of
conflicting observations (one occupied, one free), which produces
$\mO > 0$ and $\mF > 0$ simultaneously. Since boundary cells---where
the Bayesian/DS comparison matters most---receive conflicting
observations by definition, they operate in the non-consonant regime
after the first few updates.
\end{remark}

\subsection{Equivalence Theorem}
\label{sec:equiv-theorem}

We now establish a bijection between the Bayesian log-odds and belief
function representations at the single-observation level. This result
formalizes the observation, recognized informally in the belief
function community, that Dempster--Shafer theory on the binary frame
$\{O, F\}$ yields identical \emph{decision probabilities} to the
Bayesian model for a single consonant
observation~\cite{smets1994transferable}. The equivalence is at the
decision level via $\BetP$; the TBM retains a richer representation
(the mass triplet) even when the decision probability matches. The bijection follows from the BBA constraint structure on binary
frames and is implicit in the literature; the contribution here is its
explicit construction and application to sensor model matching for
controlled fusion comparison.

\begin{theorem}[Single-observation equivalence]
\label{thm:bijection}
Define the mapping $\varphi\colon \mathbb{R} \to \mathcal{M}$, where
$\mathcal{M}$ denotes the set of consonant BBAs on $\{O, F\}$ with
$\mOF > 0$, from log-odds values by:
\begin{equation}
  \varphi(L) =
  \begin{cases}
    \mO = 2\,\sigma(L) - 1,\quad
    \mF = 0,\quad
    \mOF = 2\bigl(1 - \sigma(L)\bigr)
      & \text{if } L \geq 0, \\[6pt]
    \mO = 0,\quad
    \mF = 1 - 2\,\sigma(L),\quad
    \mOF = 2\,\sigma(L)
      & \text{if } L < 0,
  \end{cases}
  \label{eq:bijection-forward}
\end{equation}
where $\sigma(L) = (1 + e^{-L})^{-1}$. Then:
\begin{enumerate}
  \item[\textnormal{(i)}]
    $\varphi(L)$ is a valid BBA: all components are non-negative and
    sum to~$1$.
  \item[\textnormal{(ii)}]
    The pignistic probability reproduces the Bayesian occupancy
    probability exactly: $\BetP(O) = \sigma(L)$.
  \item[\textnormal{(iii)}]
    $\varphi$ is a bijection between $\mathbb{R}$ and the set of
    consonant BBAs on $\{O, F\}$ with $\mOF > 0$.
\end{enumerate}
\end{theorem}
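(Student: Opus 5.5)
The plan is to verify the three claims by direct computation, splitting throughout on the sign of $L$. The only analytic fact needed is that $\sigma$ is a strictly increasing continuous bijection $\mathbb{R}\to(0,1)$ with $\sigma(0)=\tfrac12$. Hence $L\ge 0 \iff \sigma(L)\in[\tfrac12,1)$ and $L<0 \iff \sigma(L)\in(0,\tfrac12)$.

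For (i), in the case $L\ge0$ I will note that $2\sigma(L)-1\in[0,1)$ and $2(1-\sigma(L))\in(0,1]$. The components sum to $2\sigma-1+0+2-2\sigma=1$. In the case $L<0$, $1-2\sigma(L)\in(0,1)$ and $2\sigma(L)\in(0,1)$, and the sum is again $1$. In both cases one of $\mO,\mF$ is zero, so $\mO\mF=0$ and the BBA is consonant in the sense of \Cref{rem:consonant}. Also $\mOF>0$ strictly, because $\sigma(L)<1$ and $\sigma(L)>0$. So $\varphi$ indeed lands in $\mathcal{M}$.

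For (ii), I will substitute into \eqref{eq:pignistic}. For $L\ge0$ this gives $\BetP(O)=2\sigma-1+(1-\sigma)=\sigma(L)$. For $L<0$ it gives $\BetP(O)=0+\sigma(L)=\sigma(L)$.

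For (iii), the cleanest route is to exhibit an explicit two-sided inverse $\psi\colon\mathcal{M}\to\mathbb{R}$, defined by $\psi(m)=\log\frac{\BetP(O)}{1-\BetP(O)}$ with $\BetP(O)=\mO+\tfrac12\mOF$. First I check that $\psi$ is well defined: for $m\in\mathcal{M}$ with $\mOF>0$, we have $\BetP(O)\in(0,1)$, because $\BetP(O)=1$ would force $\mO=1,\mOF=0$, and $\BetP(O)=0$ would force $\mF=1,\mOF=0$. By (ii), $\psi\circ\varphi=\mathrm{id}_{\mathbb{R}}$, which gives injectivity.

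For surjectivity, i.e.\ $\varphi\circ\psi=\mathrm{id}_{\mathcal{M}}$, I will describe $\mathcal{M}$ explicitly. Consonance together with $\mOF>0$ leaves exactly two kinds of element:
\begin{itemize}
\item $(\mO,0,1-\mO)$ with $\mO\in[0,1)$, which has $\BetP(O)=\tfrac12(1+\mO)\ge\tfrac12$, so $\psi(m)\ge0$;
\item $(0,\mF,1-\mF)$ with $\mF\in(0,1)$, which has $\BetP(O)=\tfrac12(1-\mF)<\tfrac12$, so $\psi(m)<0$.
\end{itemize}
In the first case $\sigma(\psi(m))=\tfrac12(1+\mO)$, and the $L\ge0$ branch of \eqref{eq:bijection-forward} returns $\mO$, $0$, $1-\mO$. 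In the second case $\sigma(\psi(m))=\tfrac12(1-\mF)$, and the $L<0$ branch returns $0$, $\mF$, $1-\mF$. So $\varphi(\psi(m))=m$ in both cases.

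The main (minor) obstacle is this bookkeeping at the boundary between the branches. The vacuous BBA $(0,0,1)$ belongs to both descriptions of consonant masses, and it must be assigned to the $L\ge0$ branch, consistently with $\varphi(0)=(0,0,1)$. The strictness $\mOF>0$ is exactly what excludes the endpoints $L=\pm\infty$, i.e.\ the categorical BBAs, and so makes $\mathbb{R}$ rather than $[-\infty,\infty]$ the correct domain. I would state this explicitly so that the sign split in $\psi$ matches the sign split in $\varphi$.
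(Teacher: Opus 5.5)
Your proof is correct and follows the paper's route: direct case-by-case verification for (i) and (ii), and for (iii) the inverse $m \mapsto \log\bigl(\BetP(O)/(1-\BetP(O))\bigr)$. Your explicit description of $\mathcal{M}$ and the check $\varphi\circ\psi=\mathrm{id}_{\mathcal{M}}$ supply a step that the paper's one-line argument for (iii) leaves implicit. That step is that $\BetP(O)$ is injective on $\mathcal{M}$, so that $\varphi$ is onto $\mathcal{M}$; the paper only notes that $\BetP(O)$ lands in $(0,1)$.
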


\begin{proof}
\textit{(i)} For $L \geq 0$, we have
$\sigma(L) \in [\tfrac{1}{2},\, 1)$, so
$\mO = 2\sigma(L) - 1 \in [0,\, 1)$, $\mF = 0$, and
$\mOF = 2(1 - \sigma(L)) \in (0,\, 1]$. The sum is
$\mO + \mF + \mOF = (2\sigma(L) - 1) + 0 + 2 - 2\sigma(L) = 1$.
The case $L < 0$ is symmetric with $\sigma(L) \in (0,\, \tfrac{1}{2})$.

\textit{(ii)} For $L \geq 0$:
\[
  \BetP(O) = \mO + \tfrac{1}{2}\,\mOF
    = (2\sigma(L) - 1) + (1 - \sigma(L))
    = \sigma(L).
\]
For $L < 0$:
$\BetP(O) = 0 + \tfrac{1}{2} \cdot 2\sigma(L) = \sigma(L)$.

\textit{(iii)} The inverse mapping is
$\varphi^{-1}(m) = \log \frac{\BetP(O)}{1 - \BetP(O)}$, where
$\BetP(O) = \mO + \mOF/2$. Since $\sigma$ is a bijection from
$\mathbb{R}$ onto $(0,1)$ and $\BetP(O) \in (0,1)$ for any consonant
BBA with $\mOF > 0$, the composition is bijective. The consonance
constraint and $\mOF > 0$ are preserved by construction:
$\mO \cdot \mF = 0$ holds in both cases, and
$\mOF = 2\min(\sigma(L),\, 1 - \sigma(L)) > 0$ for all
$L \in \mathbb{R}$.
\end{proof}

\begin{corollary}[Single-observation equivalence]
\label{cor:single-obs}
For a single sensor observation, applying the Bayesian update
from prior $L_0 = 0$ with increment $l_{\mathrm{obs}}$ and applying
Dempster's combination from the vacuous prior $m = (0,\, 0,\, 1)$
with observation $\varphi(l_{\mathrm{obs}})$ produce identical
decision probabilities:
$\sigma(l_{\mathrm{obs}}) = \BetP(O)$.
\end{corollary}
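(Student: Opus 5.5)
The plan is to reduce the Dempster side to a single observation and then invoke \Cref{thm:bijection}(ii). The argument has two steps.

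\textbf{Step 1: the vacuous prior is the identity of Dempster's rule.} Take $m_1 = (0,0,1)$ and $m_2 = \varphi(l_{\mathrm{obs}}) = (\mO^{(2)}, \mF^{(2)}, \mOF^{(2)})$.

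First compute the conflict from \eqref{eq:conflict-mass}. Because $\mO^{(1)} = \mF^{(1)} = 0$, every product defining $K$ vanishes, so $K = 0$ and the normalization factor $1/(1-K)$ equals $1$.

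Next substitute $m_1$ into \eqref{eq:dempster-exp-mO}--\eqref{eq:dempster-exp-mOF}. Only the terms carrying $\mOF^{(1)} = 1$ survive:
\[
  \mO^{(12)} = \mO^{(2)}, \qquad
  \mF^{(12)} = \mF^{(2)}, \qquad
  \mOF^{(12)} = \mOF^{(2)}.
\]
Hence the combined BBA is exactly $\varphi(l_{\mathrm{obs}})$. This is the same vacuous-identity property later used as the base case of the accumulation lemma in the Supplementary Material.

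\textbf{Step 2: compare with the Bayesian update.} On the Bayesian side, \eqref{eq:logodds-update} with $L_0 = 0$ gives $L_1 = l_{\mathrm{obs}}$.

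One caveat concerns the clamp \eqref{eq:logodds-clamp}. It is the identity whenever $|l_{\mathrm{obs}}| \le L_{\max}$, which holds for any sensible sensor model. I would state this assumption explicitly, or note that the corollary concerns the unclamped update. The Bayesian decision probability is then $\sigma(l_{\mathrm{obs}})$.

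Finally, apply \Cref{thm:bijection}(ii) to the BBA $\varphi(l_{\mathrm{obs}})$ obtained in Step 1. It gives $\BetP(O) = \sigma(l_{\mathrm{obs}})$, which is the claimed equality of decision probabilities.

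There is no real obstacle here. The only points needing care are checking that $K = 0$, so that the normalization is well defined and trivial, and making the clamping caveat explicit so the Bayesian side is exactly $\sigma(l_{\mathrm{obs}})$. Both cases $l_{\mathrm{obs}} \ge 0$ and $l_{\mathrm{obs}} < 0$ are handled uniformly, since Step 1 never uses the sign.
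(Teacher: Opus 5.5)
Your proposal is correct and follows the paper's own proof: the vacuous prior acts as the identity for Dempster's rule, and then \Cref{thm:bijection}~(ii) gives $\BetP(O)=\sigma(l_{\mathrm{obs}})$. Where the paper simply asserts the identity property, you verify it explicitly via $K=0$ and the expanded mass formulas. Your caveat that the clamp must be inactive ($|l_{\mathrm{obs}}|\le L_{\max}$) is a sensible detail that the paper leaves implicit.
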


\begin{proof}
The vacuous prior acts as the identity element for Dempster's rule, so
one application of~\eqref{eq:dempster-general} returns the observation
BBA $\varphi(l_{\mathrm{obs}})$ unchanged. By
\Cref{thm:bijection}~(ii), its pignistic probability equals
$\sigma(l_{\mathrm{obs}})$, which is also the Bayesian posterior from
$L_0 = 0$.
\end{proof}

\begin{lemma}[Closed-form accumulation under consonant observations]
\label{lem:closed-form}
Starting from the vacuous prior $m = (0,\, 0,\, 1)$, after $N_{\mathrm{obs}}$
identical consonant occupied observations with
$m_{\mathrm{obs}} = (a,\, 0,\, 1-a)$ where $0 < a < 1$, Dempster's
rule yields:
\begin{equation}
  \mO^{(N_{\mathrm{obs}})} = 1 - (1-a)^{N_{\mathrm{obs}}}, \quad
  \mF^{(N_{\mathrm{obs}})}  = 0, \quad
  \mOF^{(N_{\mathrm{obs}})} = (1-a)^{N_{\mathrm{obs}}}.
  \label{eq:consonant-accum}
\end{equation}
The symmetric result holds for free observations.
\end{lemma}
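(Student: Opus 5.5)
The plan is induction on $N_{\mathrm{obs}}$, tracking the mass triplet after each application of Dempster's rule through the binary-frame expansions \eqref{eq:dempster-exp-mO}--\eqref{eq:dempster-exp-mOF}.

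\emph{Base case} ($N_{\mathrm{obs}}=1$). I will use the fact, already invoked in the proof of \Cref{cor:single-obs}, that the vacuous prior $(0,0,1)$ is the identity element for Dempster's rule. Combining it with $m_{\mathrm{obs}}=(a,0,1-a)$ therefore returns $(a,0,1-a)$. This matches \eqref{eq:consonant-accum} with exponent $1$.

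\emph{Inductive step.} Assume the triplet after $N$ observations is $(1-(1-a)^N,\,0,\,(1-a)^N)$. The key observation is that both operands have zero mass on $\{F\}$, so by \eqref{eq:conflict-mass} the conflict is $K=0$ and the normalization factor is $1$. Then \eqref{eq:dempster-exp-mF} gives $\mF^{(N+1)}=0$, since every product in its numerator contains an $\mF$ factor. Next, \eqref{eq:dempster-exp-mOF} gives $\mOF^{(N+1)}=(1-a)^N(1-a)=(1-a)^{N+1}$. For $\mO^{(N+1)}$, I will sum the three products in \eqref{eq:dempster-exp-mO}:
\[
\mO^{(N)}a+\mO^{(N)}(1-a)+(1-a)^N a=\mO^{(N)}+a(1-a)^N .
\]
Substituting the hypothesis, this equals $1-(1-a)^N+a(1-a)^N=1-(1-a)^{N+1}$. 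A shortcut is also available: once $\mF=0$ and $\mOF$ are known, $\mO$ follows from the sum-to-one constraint.

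\emph{Free observations.} The symmetric statement follows by swapping the labels $O$ and $F$. Dempster's rule on the binary frame is invariant under this relabeling, as the expansions show.

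There is no genuine obstacle here. The one point to state explicitly is that consonance, $\mF=0$, is preserved at every step, so $K$ stays $0$. This is the consonant regime of \Cref{rem:consonant}, and it is what keeps the recursion linear without a normalization factor. I will also note that $0<a<1$ ensures $\mOF>0$ for every finite $N_{\mathrm{obs}}$.
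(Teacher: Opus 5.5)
Your proposal is correct and matches the paper's proof essentially step for step: induction with the vacuous-identity base case, $K=0$ because $\mF^{(N)}=\mFobs=0$, and the same three-term expansion for $\mO^{(N+1)}$ and single product for $\mOF^{(N+1)}$. You also carry $\mF^{(N)}=0$ explicitly in the inductive hypothesis and derive $\mF^{(N+1)}=0$. The paper uses this fact without stating it as part of the hypothesis, so your version is slightly tighter.
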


The proof is by induction on $N$ (Supplementary Material, Section~S2).
This closed form applies to the zero-conflict case ($K = 0$); boundary
cells with mixed observations require numerical evaluation.

The equivalence of \Cref{thm:bijection} holds at the
single-observation level. Under repeated observations, the two
mechanisms diverge due to (a)~the Bayesian clamp imposing an artificial
ceiling and (b)~Dempster's $1/(1-K)$ conflict normalization having no
Bayesian analogue. Quantitative analysis is in
\Cref{sec:clamping-asymmetry}.

\subsection{Monotonic Decay of $\mOF$}
\label{sec:mof-decay}

The ignorance mass~$\mOF$ has been proposed as a metric for detecting
dynamic objects in evidential occupancy grids, on the hypothesis that
cells affected by moving objects would maintain elevated~$\mOF$ due to
conflicting observations. The following theorem
shows that $\mOF$ decreases strictly under \emph{any} informative
observation, regardless of the observation content.

\begin{theorem}[Monotonic decay of $\mOF$]
\label{thm:mof-decay}
Let $m^{\mathrm{pr}} = (\mO^{\mathrm{pr}},\, \mF^{\mathrm{pr}},\, \mOF^{\mathrm{pr}})$ be a prior BBA with
$\mOF^{\mathrm{pr}} > 0$, and let
$m^{\mathrm{obs}} = (\mO^{\mathrm{obs}},\, \mF^{\mathrm{obs}},\,
\mOF^{\mathrm{obs}})$ be a non-degenerate observation, meaning
$0 < \mOF^{\mathrm{obs}} < 1$. Then the posterior ignorance mass
after Dempster combination satisfies
\begin{equation}
  \mOF^{\mathrm{post}} < \mOF^{\mathrm{pr}}\,.
  \label{eq:mof-strict-decrease}
\end{equation}
\end{theorem}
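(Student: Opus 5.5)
The plan is to compute $\mOF^{\mathrm{post}}$ in closed form from \eqref{eq:dempster-exp-mOF} and reduce the claimed strict inequality to an elementary inequality on the conflict mass $K$. Combining prior and observation, \eqref{eq:dempster-exp-mOF} gives
\[
  \mOF^{\mathrm{post}} = \frac{\mOF^{\mathrm{pr}}\,\mOF^{\mathrm{obs}}}{1-K},
  \qquad
  K = \mO^{\mathrm{pr}}\mF^{\mathrm{obs}} + \mF^{\mathrm{pr}}\mO^{\mathrm{obs}},
\]
with $K$ as in \eqref{eq:conflict-mass}.

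The first step is to check that the combination is well defined, i.e.\ $K<1$. Every product $m^{\mathrm{pr}}(A)\,m^{\mathrm{obs}}(B)$ is non-negative, and these products sum to~$1$. The pair $(\Theta,\Theta)$ contributes the product $\mOF^{\mathrm{pr}}\mOF^{\mathrm{obs}}>0$ and lies outside $K$. Hence $1-K \ge \mOF^{\mathrm{pr}}\mOF^{\mathrm{obs}} > 0$.

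The second step uses $\mOF^{\mathrm{pr}}>0$ to divide through. The target inequality $\mOF^{\mathrm{post}}<\mOF^{\mathrm{pr}}$ is then equivalent to $\mOF^{\mathrm{obs}} < 1-K$. Since $\mO^{\mathrm{obs}}+\mF^{\mathrm{obs}}+\mOF^{\mathrm{obs}}=1$, this is in turn equivalent to
\[
  \mO^{\mathrm{pr}}\mF^{\mathrm{obs}} + \mF^{\mathrm{pr}}\mO^{\mathrm{obs}}
  \;<\; \mF^{\mathrm{obs}} + \mO^{\mathrm{obs}}.
\]

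The third step, which I expect to be the only real point requiring care, is the strictness of this inequality. The hypothesis $\mOF^{\mathrm{pr}}>0$ gives $\mO^{\mathrm{pr}}<1$ and $\mF^{\mathrm{pr}}<1$, so termwise $\mO^{\mathrm{pr}}\mF^{\mathrm{obs}}\le \mF^{\mathrm{obs}}$ and $\mF^{\mathrm{pr}}\mO^{\mathrm{obs}}\le\mO^{\mathrm{obs}}$. Equality in the first holds only if $\mF^{\mathrm{obs}}=0$, and in the second only if $\mO^{\mathrm{obs}}=0$. The non-degeneracy hypothesis $\mOF^{\mathrm{obs}}<1$ gives $\mO^{\mathrm{obs}}+\mF^{\mathrm{obs}}>0$, so at least one of these terms is strictly positive. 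The corresponding termwise inequality is then strict, and summing gives the strict inequality.

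Finally I would remark on which hypothesis each part uses. The lower bound $\mOF^{\mathrm{obs}}>0$ is used only to keep $\mOF^{\mathrm{post}}$ positive and the combination well defined. The decay itself uses only $\mOF^{\mathrm{obs}}<1$ together with $\mOF^{\mathrm{pr}}>0$. Consonance is never used, so the result holds for arbitrary, possibly non-consonant, prior BBAs. This is exactly the regime of boundary cells described in \Cref{rem:consonant}.
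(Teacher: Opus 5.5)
Your proof is correct and follows essentially the paper's argument. Both reduce the claim to $\mOF^{\mathrm{obs}} < 1-K$ and establish it via $1-K-\mOF^{\mathrm{obs}} = \mO^{\mathrm{obs}}(1-\mF^{\mathrm{pr}}) + \mF^{\mathrm{obs}}(1-\mO^{\mathrm{pr}}) > 0$, and your termwise strictness step is the same as the paper's positive-coefficient step. One small correction to your closing remark: the combination is well defined without $\mOF^{\mathrm{obs}}>0$, because every pair $(\Theta,B)$ is non-conflicting and so $1-K \ge \mOF^{\mathrm{pr}} > 0$ already.
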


\begin{proof}
From~\eqref{eq:dempster-exp-mOF}, the posterior ignorance mass is
\begin{equation}
  \mOF^{\mathrm{post}}
    = \frac{\mOF^{\mathrm{pr}} \cdot \mOF^{\mathrm{obs}}}{1 - K}\,,
  \label{eq:mof-posterior}
\end{equation}
where
$K = \mO^{\mathrm{pr}} \mF^{\mathrm{obs}} + \mF^{\mathrm{pr}} \mO^{\mathrm{obs}} \geq 0$.
Note that $K < 1$ is guaranteed since $\mOF^{\mathrm{pr}} > 0$ and $\mOF^{\mathrm{obs}} > 0$.
It suffices to show $\mOF^{\mathrm{obs}} < 1 - K$, since this yields
$\mOF^{\mathrm{obs}} / (1-K) < 1$ and
hence~\eqref{eq:mof-strict-decrease}. Substituting
$\mOF^{\mathrm{obs}} = 1 - \mO^{\mathrm{obs}} - \mF^{\mathrm{obs}}$:
\begin{align}
  1 - K - \mOF^{\mathrm{obs}}
    &= 1
       - \mO^{\mathrm{pr}} \mF^{\mathrm{obs}}
       - \mF^{\mathrm{pr}} \mO^{\mathrm{obs}}
       - 1
       + \mO^{\mathrm{obs}}
       + \mF^{\mathrm{obs}} \nonumber \\
    &= \mO^{\mathrm{obs}}\underbrace{(1 - \mF^{\mathrm{pr}})}_{=\,\mO^{\mathrm{pr}} + \mOF^{\mathrm{pr}}}
       \;+\;
       \mF^{\mathrm{obs}}\underbrace{(1 - \mO^{\mathrm{pr}})}_{=\,\mF^{\mathrm{pr}} + \mOF^{\mathrm{pr}}}\,.
  \label{eq:mof-gap-proof}
\end{align}
Since the observation is non-degenerate ($\mOF^{\mathrm{obs}} < 1$),
at least one of $\mO^{\mathrm{obs}}$ or $\mF^{\mathrm{obs}}$ is
strictly positive. Since $\mOF^{\mathrm{pr}} > 0$, both coefficients
$(\mO^{\mathrm{pr}} + \mOF^{\mathrm{pr}})$ and $(\mF^{\mathrm{pr}} + \mOF^{\mathrm{pr}})$ are strictly positive.
Therefore the right-hand side of~\eqref{eq:mof-gap-proof} is strictly
positive, giving $\mOF^{\mathrm{obs}} < 1 - K$ and
hence $\mOF^{\mathrm{post}} < \mOF^{\mathrm{pr}}$.
\end{proof}

\begin{corollary}[Exponential decay under consonant observations]
\label{cor:mof-exponential}
Under the conditions of \Cref{lem:closed-form}, the ignorance mass
decays exponentially: $\mOF^{(N_{\mathrm{obs}})} = (1-a)^{N_{\mathrm{obs}}}$, so that
$\mOF^{(N_{\mathrm{obs}})} \to 0$ as $N_{\mathrm{obs}} \to \infty$ for any $a \in (0, 1)$.
\end{corollary}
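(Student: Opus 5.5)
This corollary follows almost immediately from \Cref{lem:closed-form}, so the plan is to invoke that lemma and then take a limit. First, I will check that the hypotheses match. We start from the vacuous prior $m = (0,0,1)$ and receive $N_{\mathrm{obs}}$ identical consonant occupied observations $m_{\mathrm{obs}} = (a, 0, 1-a)$ with $0 < a < 1$. Under these conditions, \eqref{eq:consonant-accum} gives $\mOF^{(N_{\mathrm{obs}})} = (1-a)^{N_{\mathrm{obs}}}$ directly. This is a geometric sequence with ratio $r = 1-a$, which is exactly what ``exponential decay'' means here. Equivalently, $\mOF^{(N_{\mathrm{obs}})} = \exp\bigl(N_{\mathrm{obs}} \log(1-a)\bigr)$ with $\log(1-a) < 0$.

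Second, I will establish the limit. Since $a \in (0,1)$, we have $r = 1-a \in (0,1)$, so $r^{N} \to 0$ as $N \to \infty$. For a quantitative version, note $(1-a)^N \le e^{-aN}$, which follows from $1 - a \le e^{-a}$. This bound also shows that strict monotonicity holds at every step, consistent with \Cref{thm:mof-decay}. The non-degeneracy condition $0 < \mOF^{\mathrm{obs}} = 1-a < 1$ of that theorem is exactly $a \in (0,1)$, and the prior ignorance mass $(1-a)^{N} > 0$ remains positive, so the theorem applies at each update.

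For completeness, I will note the symmetric case. For free observations $(0, b, 1-b)$, the lemma's symmetric statement yields $\mOF^{(N)} = (1-b)^N$ by the same argument.

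There is no real obstacle here. The only point needing care is that the closed form relies on $K = 0$ at every step. This is guaranteed because all masses on $\{F\}$ vanish along the whole trajectory, and that fact is already established inside the lemma's inductive proof. I therefore would not re-derive it, and would only remark that the corollary does not cover mixed (conflicting) sequences, where only the non-quantitative strict decrease of \Cref{thm:mof-decay} is available.
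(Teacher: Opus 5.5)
Your proposal is correct and takes essentially the same route as the paper: the corollary is read off directly from \Cref{lem:closed-form} (itself proved by induction with $K=0$ at every step), followed by $(1-a)^{N_{\mathrm{obs}}}\to 0$ because $1-a\in(0,1)$. One small aside: the bound $(1-a)^{N}\le e^{-aN}$ does not by itself give strict monotonicity. That follows instead from $(1-a)^{N+1}=(1-a)\,(1-a)^{N}<(1-a)^{N}$, or from \Cref{thm:mof-decay} as you note, and in any case the corollary does not need it.
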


\begin{remark}[Scope of \Cref{thm:mof-decay}]
\label{rem:mof-scope}
The raw magnitude of $\mOF$ cannot serve as a standalone dynamic detection
metric, as it decreases monotonically regardless of environment dynamics.
However, derived quantities---the rate of change $\Delta\mOF^{(k)}$, the
pignistic uncertainty $u_{\mathrm{pign}} = 1 - |2\,\BetP(O) - 1|$, or the full TBM state---could in
principle detect dynamics through mass redistribution under conflicting
observations. Whether these capabilities translate to practical advantages
is an empirical question addressed in \Cref{sec:results}.
\end{remark}

%% file: methodology.tex


\section{Fair Comparison Methodology}
\label{sec:fair-comparison}

Armed with the formal equivalence results of
\cref{sec:equivalence}---in particular, the bijection $\varphi$
between log-odds and consonant belief masses---we now develop the fair
comparison methodology that operationalises this bijection for
experimental use.

A meaningful comparison of Bayesian and belief function fusion requires
that any observed performance difference be attributable to the fusion
rule itself, not to differences in per-observation sensor model
parameterization. This section develops a \emph{decision-theoretic} methodology based on
the pignistic transform that ensures per-observation equivalence at the
decision level, analyzes its locality properties, and characterizes the
divergence mechanism that arises under multi-observation accumulation.
The matching is decision-theoretic rather than representationally
neutral: it equates pignistic probabilities, not the belief function
representations themselves. Alternative probability transforms
(plausibility, maximum entropy) would yield different matched masses
and thus a different comparison.

\subsection{The Sensor Model Equivalence Problem}
\label{sec:equivalence-problem}

In an initial multi-robot experiment (three robots, dynamic obstacles,
15 independent runs), belief function fusion achieved $+12\%$ higher
boundary sharpness than Bayesian fusion. After applying the pignistic
transform matching procedure described below---which constrains the
belief function masses to produce identical per-observation decision
probabilities---the same experiment yielded a $-22\%$ to $-29\%$
boundary sharpness \emph{deficit} for belief function fusion ($-22\%$
under the dynamic-baseline condition, $-29\%$ under elevated sensor
noise), with 15/15 directional consistency favoring the Bayesian arm. The reversal
was caused entirely by the sensor model parameterization, not by any
change to the fusion algorithms themselves. This motivates a formal
matching criterion: the per-observation update must produce
\emph{identical decision probabilities} under both frameworks before
any multi-observation accumulation occurs.

\subsection{Pignistic Transform Matching}
\label{sec:pignistic-matching}

We derive matched belief function masses from Bayesian log-odds
parameters using the pignistic probability transform
\cite{smets2005decision}. Given a basic belief assignment $m$ on the
binary frame $\Theta = \{O, F\}$, the pignistic probability of
occupancy is
\[
  \BetP(O) = \mO + \frac{\mOF}{2}
\]
(restated from \cref{eq:pignistic} for reader convenience).

In the Bayesian framework, a single observation with log-odds $l$
produces the occupancy probability $p = \sigma(l)$, where
$\sigma(l) = 1/(1 + e^{-l})$ is the logistic sigmoid. The matched
masses derived below are the constructive application of the bijection
$\varphi$ from \Cref{thm:bijection}: given $l$, the unique consonant
BBA satisfying $\BetP(O) = \sigma(l)$ is precisely $\varphi(l)$. We
restate the construction explicitly for reader convenience. The
matching condition requires
\begin{equation}
  \BetP(O) = \sigma(l),
  \label{eq:matching-condition}
\end{equation}
for each observation type (occupied hit or free traversal).

Setting $p = \sigma(l)$ and requiring that $\mO + \mF + \mOF = 1$
with all masses non-negative, the unique solution is:
\begin{align}
  \mO  &= \max(0,\; 2p - 1), \label{eq:mass-occ} \\
  \mF  &= \max(0,\; 1 - 2p), \label{eq:mass-free} \\
  \mOF &= 1 - |2p - 1|.      \label{eq:mass-of}
\end{align}
Verification is immediate: $\mO + \mOF/2 = \max(0, 2p-1) +
(1 - |2p-1|)/2 = p$ for both $p \geq 0.5$ and $p < 0.5$.

\paragraph{Numerical example}
For the sensor parameters used in all experiments ($\locc = 2.0$,
$\lfree = -0.5$):
\begin{itemize}
  \item \textbf{Occupied hit}: $p = \sigma(2.0) = 0.8808$;
    $m = (0.7616,\; 0,\; 0.2384)$;
    $\BetP(O) = 0.7616 + 0.2384/2 = 0.8808$.\vspace{2pt}
  \item \textbf{Free traversal}: $p = \sigma(-0.5) = 0.3775$;
    $m = (0,\; 0.2449,\; 0.7551)$;
    $\BetP(O) = 0 + 0.7551/2 = 0.3775$.
\end{itemize}
In both cases, the pignistic probability equals the Bayesian
probability exactly.

\paragraph{Choice of probability transform}
We adopt $\BetP$ because it is the standard decision-theoretic bridge
within the TBM~\cite{smets2005decision} and the most widely used in
the evidential occupancy grid
literature~\cite{moras2011moving,huletski2017vinyslam}. The matching
is BetP-specific; the normalized plausibility transform
$P_{\mathrm{Pl}}(O) = \mathrm{Pl}(O)/(\mathrm{Pl}(O)+\mathrm{Pl}(F))$
yields different matched masses. For $\locc = 2.0$ ($p = 0.8808$):
$\BetP$ matching gives $m = (0.762, 0, 0.238)$, while
$P_{\mathrm{Pl}}$ matching gives $m = (0.865, 0, 0.135)$---44\% less
ignorance mass per observation, producing sharper DS updates. Under
multi-observation accumulation the combined BBA becomes non-consonant
at boundary cells (where both $\mO > 0$ and $\mF > 0$), and $\BetP$
and normalized plausibility diverge further; our evaluation is therefore
BetP-specific throughout, with single-agent $P_{\mathrm{Pl}}$
sensitivity analysis reported in \cref{sec:results-h002}.

\paragraph{Terminology}
``Dempster's rule'' denotes the normalized combination rule
($1/(1-K)$ conflict normalization); ``DS'' is used as shorthand in
tables and figures.

\paragraph{Consonant mass functions}
The pignistic-derived masses are consonant ($\mF = 0$ for occupied,
$\mO = 0$ for free observations), reflecting the physical constraint
of single-source inverse sensor models: a lidar beam returns evidence
for \emph{either} occupancy or free space, never both simultaneously.
All prior DS occupancy grid work cited
here~\cite{moras2011moving,moras2011credibilist,huletski2017vinyslam}
uses consonant BBAs for the same reason. Non-consonant BBAs from
multi-source fusion fall outside this scope.

\subsection{Per-Observation Locality}
\label{sec:per-obs-locality}

A critical property of the pignistic matching is that it holds
\emph{per-observation}: at each specific sensor distance $d$ and angle
of incidence $\alpha$, the matching condition \eqref{eq:matching-condition}
is satisfied independently.

In the single-agent simulation experiment
(\cref{sec:h002-setup}), the sensor model is non-homogeneous:
log-odds values decay exponentially with distance and angle of
incidence,
\begin{equation}
  l(d, \alpha) = l_{\mathrm{obs}} \cdot
    \exp(-\lambda_d \, d) \cdot \exp(-\lambda_\alpha \, \alpha),
  \label{eq:sensor-decay}
\end{equation}
where $l_{\mathrm{obs}} \in \{\locc, \lfree\}$ and
$(\lambda_d, \lambda_\alpha) = (0.1, 0.5)$ are the distance and angle
decay rates respectively. Because the pignistic-derived masses are
computed from $p = \sigma(l(d, \alpha))$ at each $(d, \alpha)$ pair
individually, the matching holds at every observation regardless of the
spatial variation in sensor confidence. The Equivalence Theorem
(\cref{sec:equiv-theorem}) therefore applies locally to each
per-observation update, even though the global sensor model is
non-homogeneous.

This per-observation locality is the key property that enables fair
comparison under realistic sensor models. It ensures that any aggregate
performance difference between the two frameworks must arise from the
multi-observation \emph{accumulation rule}---Bayesian log-odds addition
with clamping versus Dempster's combination with conflict
normalization---not from the single-observation updates.

The multi-robot simulation, mechanism test, and real-data (scan-split) experiments
use constant sensor parameters ($\locc = 2.0$, $\lfree = -0.5$) with
pignistic-matched constant masses. The per-observation locality
argument still holds---it simply reduces to a single matching point
rather than a continuous function of $(d, \alpha)$. The non-homogeneous
model is exercised only in the single-agent experiment to demonstrate
that the methodology extends to spatially varying sensor confidence;
the real-data sensor model is not calibrated to the SICK~LMS200
specifically. Although the sensor parameter sensitivity analysis
(Supplementary Material, Section~S4) confirms the Bayesian advantage
across four parameterizations in simulation, the real-data experiments
use only the default parameterization; real-data sensitivity to sensor
parameters remains untested.

\subsection{Clamping Asymmetry Analysis}
\label{sec:clamping-asymmetry}

Although per-observation updates are identical by construction, the two
frameworks accumulate evidence differently over multiple observations.
We analyze this divergence for the parameters used in our experiments
($\locc = 2.0$, clamp limit $L_{\max} = 10$).

\paragraph{Bayesian accumulation}
The log-odds accumulator clamps at $|L| \leq L_{\max}$:
\begin{equation}
  L^{(N)} = \min\bigl(L^{(0)} + N \cdot \locc,\; L_{\max}\bigr).
  \label{eq:bayesian-accum}
\end{equation}
For $\locc = 2.0$ and $L^{(0)} = 0$, the clamp saturates at $N = 5$,
yielding $p(O) = \sigma(10) = 0.999955$ permanently.

\paragraph{Dempster--Shafer accumulation}
Starting from the vacuous prior $m = (0, 0, 1)$ and applying $N$
identical occupied observations with $m_{\mathrm{obs}} = (a, 0, 1-a)$
where $a = 0.7616$, Dempster's combination produces (by induction; see
\cref{sec:equiv-theorem}):
\begin{equation}
  \mO^{(N)} = 1 - (1-a)^N, \quad
  \mOF^{(N)} = (1-a)^N, \quad
  \BetP(O) = 1 - \frac{(1-a)^N}{2}.
  \label{eq:ds-accum}
\end{equation}
Since all observations agree ($\mFobs = 0$), there is no
conflict ($K = 0$), and $\BetP(O)$ converges asymptotically to~$1.0$
with no artificial ceiling.

The convergence trajectories are compared in
Supplementary Material, Section~S3: the methods agree at $N=1$ (by construction),
Bayesian leads at $N = 2$--$5$, clamps at $N = 5$, and is overtaken by
Dempster--Shafer at approximately $N = 7$.

\paragraph{Two divergence mechanisms}
For interior cells ($N \gg 5$, consistent observations),
Dempster--Shafer produces marginally higher confidence
($\BetP \to 1.0$ vs.\ $p = 0.999955$), a difference that does not
affect classification. For boundary cells with conflicting
observations, Dempster's multiplicative combination with $1/(1-K)$
normalization converges more slowly to certainty than additive
log-odds---e.g., after 5~occupied and 5~free observations,
$\BetP(O) = 0.9973$ vs.\ $p(O) = 0.9994$.
The net aggregate effect favors Bayesian fusion because boundary
cells are where classification errors concentrate.


\section{Experimental Setup}
\label{sec:experimental-setup}

We evaluate the two fusion frameworks across three conditions: a
controlled single-agent simulation, a multi-robot simulation
with pose graph optimization, and real-data validation on two
standard indoor lidar datasets.

\subsection{Simulation Environments}
\label{sec:sim-environments}

\subsubsection{Single-Agent Comparison}
\label{sec:h002-setup}

The single-agent experiment uses procedurally generated 2D environments
of $50 \times 50$\,m at $0.1$\,m grid resolution ($500 \times 500$
cells). Each environment contains 3~rooms connected by 4~corridors,
5~static obstacles (convex polygons), and 3~dynamic obstacles moving at
$0.5$\,m/s. A single robot follows a deterministic counter-clockwise perimeter
patrol trajectory (step size ${\approx}0.38$\,m) with a simulated
lidar sensor: 180~rays, $360^\circ$ field of view ($2^\circ$ angular
spacing), $15$\,m maximum range, Gaussian range noise
$\sigma_r = 0.02$\,m.

The sensor model is non-homogeneous (\cref{sec:per-obs-locality}),
with base parameters $\locc = 2.0$, $\lfree = -0.5$ and decay rates
$\lambda_d = 0.1$, $\lambda_\alpha = 0.5$. The belief function arm uses
pignistic-matched masses derived from these log-odds at each
$(d, \alpha)$ pair. The Bayesian arm clamps at $L_{\max} = 10$.
Each condition is repeated for 15 independent runs with seeds 42--56.

\subsubsection{Multi-Robot Comparison}
\label{sec:h003-setup}

The multi-robot experiment uses $20 \times 20$\,m environments at
$0.1$\,m resolution ($200 \times 200$ cells) with 1~room, 5~static
obstacles, and 3~dynamic obstacles (speed $0.3$\,m/s). Three robots explore simultaneously, each following a
counter-clockwise perimeter patrol trajectory (200~steps, step size
${\approx}0.36$\,m) with 90-ray lidar ($360^\circ$ field of view,
$4^\circ$ angular spacing, $8$\,m range, $\sigma_r = 0.03$\,m). Odometry drift is simulated with
$\sigma_{\mathrm{trans}} = 0.02$\,m and
$\sigma_{\mathrm{rot}} = 0.005$\,rad per step. Robots start at
different positions along the environment perimeter.

Individual robot maps are aligned via pose graph optimization (PGO)
triggered by rendezvous detection at $2.5$\,m proximity, then fused
cell-wise. Two conditions are tested: a \emph{dynamic baseline}
(nominal noise) and a \emph{noisy sensor} condition (elevated sensor
noise). Each condition uses 15 independent runs.

\paragraph{Simulation scope}
Simulation environments are geometrically simpler than real-world
datasets (fewer obstacles, convex geometries). Real-data validation
(\cref{sec:real-data}) confirms generalization to complex indoor
geometries.

\subsection{Real Data: Intel Lab and Freiburg~079}
\label{sec:real-data}

\subsubsection{Intel Research Lab}
\label{sec:intel-lab}

The Intel Research Lab dataset \cite{howard2003intel} contains laser
scans collected by a mobile robot in an office environment with
furniture, narrow passages, and non-convex obstacles. The sensor is a
SICK~LMS200 lidar with 180~beams and $180^\circ$ field of view. We set the effective maximum range to
$8.0$\,m, discarding returns beyond this threshold to match the indoor
operating regime.

Scans are split 80\%/20\% into training and test sets. The training
set is used for map construction (both Bayesian and belief function
arms process identical scan sequences); the test set generates ground
truth occupancy labels for evaluation. This train/test split ensures
that ground truth is independent of the mapping data, avoiding circular
evaluation.

Scan-split fusion conditions are created by splitting the training scans
into $R \in \{2, 4\}$ interleaved subsequences, each assigned to a
separate virtual robot. Individual maps are constructed independently and then
fused cell-wise, identical to the simulation multi-robot protocol.
Real-data scan-split conditions test sensitivity to data partitioning
under ideal alignment, not end-to-end multi-robot SLAM performance.
This procedure eliminates alignment uncertainty and coverage
heterogeneity present in true multi-robot deployments.

Sensor parameters ($\locc = 2.0$, $\lfree = -0.5$) are identical to
the simulation experiments; belief function masses are derived via the
pignistic transform at each observation.

\subsubsection{Freiburg Building~079}
\label{sec:freiburg}

The Freiburg~079 dataset provides laser scans from a Pioneer~2 robot
equipped with a SICK~LMS200 (360~beams, $180^\circ$ field of view) in a
multi-room university building. Processing follows the same protocol as
Intel Lab: $0.1$\,m grid resolution, $8.0$\,m effective maximum range,
80/20 train/test split, sensor parameters $\locc = 2.0$,
$\lfree = -0.5$, and scan-split fusion for
$R \in \{2, 4\}$.

Both datasets are standard benchmarks in the occupancy grid mapping
literature. Their age (2003--2004) does not affect the validity of the
comparison, as both fusion arms process identical sensor data.
Ground truth covers approximately 13--14\% of cells (approximately
13\% for Intel Lab, 14\% for Freiburg; occupied/free labels from test
scans); frontier and unobserved cells are excluded from evaluation. Boundary cell conflict levels in real data (median
$K = 0.24$--$0.33$) are comparable to the simulation regime
($K = 0.05$--$0.35$); see \cref{sec:clamping-tradeoff} for detailed
conflict statistics.
Generalization to outdoor or 3D lidar datasets is acknowledged as
future work.

\subsection{Metrics}
\label{sec:metrics}

Four complementary metrics are evaluated. For all metrics, the
evaluation set $\mathcal{C}_{\mathrm{eval}}$ contains cells with at
least three test-set observations, ensuring reliable ground truth
labelling. Higher values of cell accuracy and boundary sharpness
indicate better performance; lower values of Brier score and map
entropy indicate better performance.

\paragraph{Cell accuracy}
The fraction of evaluated cells whose thresholded occupancy probability
agrees with the binary ground truth label:
\begin{equation}
  \mathrm{CellAcc} = \frac{1}{|\mathcal{C}_{\mathrm{eval}}|}
    \sum_{c \in \mathcal{C}_{\mathrm{eval}}}
    \mathbb{1}\bigl[(\hat{p}_c > 0.5) = g_c\bigr],
  \label{eq:cell-accuracy}
\end{equation}
where $\hat{p}_c$ is the estimated occupancy probability (Bayesian
posterior or $\BetP(O)$ for belief functions) and $g_c \in \{0, 1\}$
is the ground truth.

\paragraph{Boundary sharpness}
The mean gradient magnitude at occupied/free transitions, reflecting
map utility for path planning (sharper boundaries reduce required
safety margins):
\begin{equation}
  \mathrm{BdrySharp} = \frac{1}{|\mathcal{C}_{\mathrm{bdry}}|}
    \sum_{c \in \mathcal{C}_{\mathrm{bdry}}}
    \|\nabla \hat{p}_c\|,
  \label{eq:boundary-sharpness}
\end{equation}
where $\mathcal{C}_{\mathrm{bdry}} \subset \mathcal{C}_{\mathrm{eval}}$
is the set of cells adjacent to a ground truth occupancy transition.
Higher sharpness indicates crisper boundary representation.

\paragraph{Brier score}
Mean squared error between predicted probabilities and binary
ground truth:
\begin{equation}
  \mathrm{Brier} = \frac{1}{|\mathcal{C}_{\mathrm{eval}}|}
    \sum_{c \in \mathcal{C}_{\mathrm{eval}}}
    \bigl(\hat{p}_c - g_c\bigr)^2.
  \label{eq:brier-score}
\end{equation}

\paragraph{Map entropy}
Mean per-cell binary entropy, measuring decisiveness:
\begin{equation}
  H = -\frac{1}{|\mathcal{C}_{\mathrm{eval}}|}
    \sum_{c \in \mathcal{C}_{\mathrm{eval}}}
    \bigl[\hat{p}_c \log_2 \hat{p}_c +
      (1 - \hat{p}_c) \log_2(1 - \hat{p}_c)\bigr].
  \label{eq:map-entropy}
\end{equation}

A downstream path-planning clearance metric is reported separately
in \cref{sec:results-downstream}.

\subsection{Statistical Analysis Framework}
\label{sec:stats-framework}

We employ TOST equivalence testing~\cite{schuirmann1987comparison,lakens2017equivalence}
with margin sensitivity analysis as the primary statistical tool. Nominal margins
are $\delta = 0.02$ (cell accuracy), $\delta = 0.03$ (boundary sharpness),
$\delta = 0.01$ (Brier score), and $\delta = 0.02$ (map entropy); we report the
smallest margin at which equivalence holds for each comparison rather than a
single pre-specified margin. Effect sizes use paired Cohen's
$d$~\cite{cohen1988statistical} with 95\% Hedges--Olkin~\cite{hedges1985statistical}
confidence intervals (CIs), verified against exact non-central $t$ CIs (Supplementary Material).
A directional consistency of $k/N = 15/15$ has exact binomial probability
$p = 3.1 \times 10^{-5}$ under the null of random direction.
Holm--Bonferroni correction~\cite{holm1979simple} controls the family-wise error
rate across all nine simulation comparisons. For real-data experiments (single map
per condition), spatial block bootstrap with $10{,}000$ iterations provides 95\%
CIs that account for spatial autocorrelation (Supplementary Material).
Full details of the statistical framework, including directional analysis,
multiplicity correction, and supplementary Bayesian measures, are given in
the Supplementary Material.

%% file: results.tex

\section{Results}
\label{sec:results}

We present results in order of increasing complexity: single-agent
simulation, multi-robot simulation, and real-data validation. For each
simulation comparison, we report in a standardized format: raw mean
difference with 90\% CI, Cohen's $d$ with 95\% CI, TOST sensitivity
verdict, and directional consistency $k/N$.

\subsection{Single-Agent Comparison}
\label{sec:results-h002}

The single-agent experiment (\cref{sec:h002-setup}) compares
Bayesian log-odds and \DS{} fusion over 15 independent runs in
procedurally generated 50$\times$50\,m environments with dynamic
obstacles.

\paragraph{Cell accuracy}
The Bayesian arm achieves higher cell accuracy in all 15 runs
(directional consistency 15/15). The mean paired difference is
$+0.0013$ (90\% CI $[+0.0013, +0.0014]$) on a $[0,1]$ scale, with
Cohen's $d = +13.51$ (95\% CI $[+8.01, +19.02]$). TOST confirms
equivalence at the nominal margin $\delta = 0.02$ ($p < 10^{-10}$),
and the equivalence verdict is robust: it holds at margins down to
$0.5\times$ nominal and beyond. The small absolute difference
($<0.2\%$) is practically negligible, but consistently favors
Bayesian.

\paragraph{Boundary sharpness}
Bayesian fusion produces sharper boundary representations in all 15
runs (15/15). The mean difference is $+0.0102$ (90\% CI
$[+0.0096, +0.0108]$), with $d = +7.78$ (95\% CI
$[+4.58, +10.98]$). TOST confirms equivalence at $\delta = 0.03$
($p < 10^{-10}$), holding at $0.5\times$ nominal and beyond.

\paragraph{Brier score}
The Bayesian arm achieves lower (better) Brier scores in all 15 runs
(15/15). The mean difference is $-0.0056$ (90\% CI
$[-0.0057, -0.0054]$), with $d = -14.93$ (95\% CI
$[-21.00, -8.85]$). TOST confirms equivalence at $\delta = 0.01$
($p < 10^{-10}$), but the equivalence verdict is \emph{not} robust to
margin reduction: it breaks at $0.5\times$ nominal ($\delta = 0.005$),
where the mean difference ($0.0056$) approaches the margin. This
fragility reflects the precision of the comparison, not a directional
reversal---the Bayesian arm is still favored in all 15 runs.

\paragraph{Summary}
Practical differences are small ($0.001$--$0.010$ on $[0,1]$ scales) but
consistently favor Bayesian fusion (15/15 on every metric). Full
statistical details are in \cref{tab:tost,tab:effect-sizes} and
\cref{fig:h002-violins}. Non-central $t$ CI verification confirms
identical qualitative conclusions (Supplementary Material, Section~S1.3).

\input{table1_tost}
\input{table2_effect_sizes}

\subsubsection{Sensitivity to Matching Criterion}
\label{sec:results-ppl}

When per-observation equivalence uses normalized plausibility
$P_{\mathrm{Pl}}$ instead of $\BetP$ (single-agent only, 15~seeds),
the direction reverses: DS achieves better boundary sharpness
($\Delta = -0.002$, 15/15) and Brier score ($\Delta = +0.008$, 15/15),
while cell accuracy remains tied ($|\Delta| < 0.0001$). The reversal
is mechanistically expected: $P_{\mathrm{Pl}}$ matching allocates 44\%
less ignorance mass per observation than $\BetP$ matching
(\cref{sec:fair-comparison}), producing sharper DS updates that converge
faster at boundary cells. This experiment was not extended to
multi-robot conditions or real data; the scope of the reversal beyond
single-agent simulation remains open.
See \cref{sec:limitations} for interpretation.

\subsection{Remark: Count-Augmentation Equivalence}
\label{sec:results-count}

A count-augmented Bayesian variant ($L$, $n$) produces identical occupancy
estimates to the standard Bayesian arm in all simulation experiments;
the count provides no measurable advantage for any metric. It is not
tested separately on real data (identical maps when observation count is
unambiguous). The comparison is between fusion \emph{rules}, not
information representations.

\subsection{Multi-Robot Comparison}
\label{sec:results-h003}

The multi-robot experiment (\cref{sec:h003-setup}) tests three
robots with PGO alignment and cell-wise fusion under two conditions:
a dynamic baseline (nominal noise) and a noisy sensor condition
(elevated measurement noise). Each condition has 15 independent runs.

\subsubsection{Dynamic Baseline}
\label{sec:results-h003-dyn}

Bayesian fusion outperforms \DS{} on all three metrics with 15/15
directional consistency. Cell accuracy shows a mean difference of
$+0.0105$ (90\% CI $[+0.0097, +0.0113]$, $d = +6.14$, 95\% CI
$[+3.59, +8.69]$), and TOST confirms equivalence at $\delta = 0.02$
($p < 10^{-10}$). However, the equivalence verdict is sensitive to
margin choice: it breaks at $0.5\times$ nominal ($\delta = 0.01$),
where the mean difference ($0.0105$) nearly saturates the margin.

Boundary sharpness and map entropy show substantially larger
differences that are \emph{not equivalent} at nominal margins. For
boundary sharpness, the mean difference is $+0.0701$ (90\% CI
$[+0.065, +0.075]$, $d = +5.96$, 95\% CI $[+3.48, +8.44]$), more
than double the $\delta = 0.03$ margin. For map entropy, the mean
difference is $-0.0261$ (90\% CI $[-0.028, -0.025]$, $d = -7.11$,
95\% CI $[-10.04, -4.17]$), exceeding the $\delta = 0.02$ margin.
Both non-equivalence verdicts persist after Holm--Bonferroni
correction. In all cases, Bayesian is the favored arm.

\subsubsection{Noisy Sensor}
\label{sec:results-h003-noisy}

The noisy sensor condition replicates the dynamic baseline pattern.
Cell accuracy, which consistently favors Bayesian (15/15), is
equivalent at $\delta = 0.02$ (mean difference $+0.0094$, $d = +6.03$);
as in the dynamic baseline, equivalence breaks at $0.5\times$ nominal
($\delta = 0.01$, $p = 0.10$). Boundary sharpness ($+0.0628$,
$d = +4.62$) and map entropy ($-0.0256$, $d = -6.32$) are not
equivalent at nominal margins, both favoring Bayesian. Directional
consistency is 15/15 for all metrics.

\paragraph{Multi-robot summary}
Of the six multi-robot comparisons (3~metrics $\times$ 2~conditions),
two are TOST-equivalent at nominal margins (cell accuracy in both
conditions) and four are not (boundary sharpness and map entropy in both
conditions). The four non-equivalent comparisons show large differences
($d = 4.6$--$7.1$), all favoring Bayesian; their failure to achieve
equivalence reflects the size of the Bayesian advantage, not a belief
function advantage. Directional consistency is 15/15 across all six
comparisons.

Results are visualized in \cref{fig:h003-violins}.

\subsection{Real-Data Validation}
\label{sec:results-real}

\subsubsection{Intel Research Lab}
\label{sec:results-intel}

The Intel Lab experiment (\cref{sec:intel-lab}) evaluates both
fusion arms on a single dataset, producing per-cell paired differences
over the evaluation set (approximately 13\% of grid cells meet the
minimum observation threshold). Results are summarized in
\cref{tab:intel-lab} and visualized in \cref{fig:intel-lab}.

For cell accuracy, the Bayesian arm outperforms \DS{} across all
scan-split configurations. Paired-cell deltas (Bayesian $-$ Dempster) are
$+0.018$ (1-source), $+0.021$ (2-way split), and $+0.022$
(4-way split); 95\% spatial block bootstrap CIs exclude zero in
all cases
(\cref{tab:intel-lab}). The cell accuracy gap is
consistent with simulation findings and with the clamping asymmetry
mechanism (\cref{sec:clamping-asymmetry}): boundary cells under
Dempster's rule receive less decisive occupancy estimates due to
conflict normalization.

For Brier score, the Bayesian arm again outperforms \DS{}: paired
deltas (Bayesian $-$ Dempster) are $-0.013$ to $-0.017$ (negative indicating
lower---better---Brier for Bayesian), with spatial block bootstrap CIs
excluding zero.

Boundary sharpness differences on Intel Lab are not statistically
significant for any scan-split configuration: the paired-cell delta
(Bayesian $-$ Dempster) is $+0.001$ for 1-source (95\% CI
$[-0.007,\;+0.007]$), $-0.003$ for 2-way split (95\% CI
$[-0.010,\;+0.003]$), and $-0.005$ for 4-way split (95\% CI
$[-0.013,\;+0.002]$); all CIs include zero (\cref{tab:intel-lab}).
The $L_{\max}$ ablation (\cref{sec:lmax-ablation}), performed in
simulation, is consistent with this non-significance: at $L_{\max}
= \infty$, boundary sharpness $\Delta$ approaches zero ($-0.0001$),
suggesting that any Intel Lab boundary sharpness differences are
clamping-related rather than reflecting an intrinsic property of
either fusion rule. Yager's rule produces the lowest boundary sharpness
of all four arms on Intel Lab ($0.183$ 1-source vs.\ $0.207$
Bayesian and $0.206$ Dempster), consistent with the
conflict-to-ignorance mechanism preserving excessive boundary
uncertainty.

\input{table3_cross_condition}
\input{table4_intel_lab}

\subsubsection{Freiburg Building~079}
\label{sec:results-freiburg}

The Freiburg~079 experiment~\cite{freiburg079dataset} follows the identical protocol. The
evaluation set covers approximately 14\% of grid cells. Results are
summarized in \cref{tab:freiburg} and visualized in \cref{fig:freiburg}.

The Bayesian arm outperforms \DS{} on all three metrics across all
scan-split configurations. Paired-cell deltas (Bayesian $-$ Dempster) for cell
accuracy are $+0.007$ (stable across all split counts), for
boundary sharpness from $+0.010$ to $+0.011$, and for Brier score
$-0.007$ (stable across all split counts); all spatial block bootstrap CIs exclude
zero
(\cref{tab:freiburg}). Unlike Intel Lab, boundary sharpness
consistently favors Bayesian, aligning with the simulation findings.

\input{table5_freiburg}

\paragraph{Real-data summary}
Both datasets confirm the simulation direction: Bayesian fusion matches
or outperforms \DS{} on cell accuracy and Brier score across all
scan-split configurations, with all spatial block bootstrap CIs excluding zero.
For boundary sharpness, Bayesian is consistently better on Freiburg~079
and in all simulation conditions; on Intel Lab, boundary sharpness
differences are not statistically significant for any scan-split
configuration (all 95\% CIs include zero), consistent with the
$L_{\max}$ ablation (\cref{sec:lmax-ablation}) attributing any
boundary differences to clamping rather than an intrinsic DS advantage. Map entropy is not reported for real data because partial grid coverage
($\sim$13--14\% of cells evaluated) makes grid-level entropy less
meaningful than in full-coverage simulation. Both datasets are indoor
lidar environments; generalization to outdoor or 3D settings is not
claimed.

No multiplicity correction is applied: real-data tests are confirmatory
(direction predicted by simulation), and Bonferroni correction
($\alpha/18 = 0.0028$) would not change any verdict.

\subsection{Cross-Condition Consistency}
\label{sec:results-cross}

\Cref{tab:cross-condition} summarizes the mean values and TOST
verdicts across all simulation conditions. Of the nine simulation
comparisons, five are TOST-equivalent at nominal margins (cell accuracy
in all three conditions, plus boundary sharpness and Brier score in the
single-agent condition) and four are not (boundary sharpness and map
entropy in both multi-robot conditions). In all nine comparisons, the
Bayesian arm is the favored direction (15/15 consistency). Holm--Bonferroni correction across the
full family of nine tests does not change any equivalence verdict.
The nine simulation TOST comparisons constitute the primary
confirmatory family; all ablation, mechanism, and sensitivity analyses
are exploratory. Post-hoc power exceeds 0.99 for all observed effect
sizes at nominal TOST margins.

Bayesian equivalence testing (Supplementary Material) confirms all
frequentist verdicts. The forest plot (\cref{fig:forest-plot})
visualizes the effect sizes and confidence intervals across all
conditions, illustrating the consistent Bayesian advantage.

\subsubsection{Mechanism Test}
\label{sec:results-mechanism}

To test whether the Bayesian advantage persists or reverses as the
number of map fusion events increases---as the pre-remediation results
(\cref{sec:equivalence-problem}) had suggested might occur---we
conducted a mechanism test using the single-agent environment ($50\times50$\,m,
500 total trajectory steps) with $R \in \{1, 2, 3, 5\}$ robots, each
executing $500/R$ steps, and 15 runs per configuration.

The boundary sharpness advantage for Bayesian is stable across robot
counts: the paired difference (Bayesian~$-$~\DS{}) ranges from $+0.124$
($R = 1$) to $+0.141$ ($R = 3$), with no significant trend
(\cref{tab:mechanism}). The cell accuracy advantage is similarly stable
at $+0.007$ to $+0.009$. At no robot count does \DS{} match or
exceed Bayesian performance on either metric. The standard deviations
of the per-arm sharpness values (reported in \cref{tab:mechanism})
confirm that the confidence bands do not overlap.

\begin{table}[t]
\centering
\caption{Mechanism test: boundary sharpness and cell accuracy across
  robot counts ($R$). Each robot executes $500/R$ steps in the
  single-agent environment. Values are mean $\pm$ SD over 15 runs.
  $\Delta$ is the paired difference (Bayesian $-$ \DS{}).}
\label{tab:mechanism}
\begin{tabular}{@{}rcccc@{}}
\toprule
$R$ & Bayes.\ sharpness & \DS{} sharpness & $\Delta_{\mathrm{sharp}}$
    & $\Delta_{\mathrm{acc}}$ \\
\midrule
1 & $0.724 \pm 0.032$ & $0.600 \pm 0.036$ & $+0.124$ & $+0.007$ \\
2 & $0.725 \pm 0.117$ & $0.599 \pm 0.098$ & $+0.126$ & $+0.008$ \\
3 & $0.714 \pm 0.025$ & $0.573 \pm 0.014$ & $+0.141$ & $+0.009$ \\
5 & $0.711 \pm 0.040$ & $0.580 \pm 0.041$ & $+0.131$ & $+0.008$ \\
\bottomrule
\end{tabular}
\end{table}

The absence of a crossover is consistent with the pignistic-matched
comparison: when per-observation updates are equivalent, Dempster's
conflict normalization produces a stable Bayesian advantage regardless
of the number of fusion events. We note that the design confounds
fusion count with per-robot map quality (at $R = 5$, each robot
executes only 100 steps, producing sparser maps), but the stability of
$\Delta$ across $R$ suggests that per-robot sparsity does not
qualitatively change the comparison (linear regression of
$\Delta_{\mathrm{sharp}}$ on $R$: slope $= +0.002$, $p = 0.52$). A
complementary design holding per-robot steps constant would isolate the
fusion-count effect more cleanly and is noted as future work.
The pre-remediation observation of a \DS{} advantage at higher robot
counts was a sensor model confound, not a genuine property of the
fusion rule.

\Cref{fig:mechanism-test} visualizes the mechanism test with error bands across robot
counts.

\subsection{$L_{\max}$ Ablation}
\label{sec:lmax-ablation}

To test whether the Bayesian advantage arises from $L_{\max}$ clamping
or from the combination rule itself, we repeat the single-agent experiment
at $L_{\max} \in \{5, 10, 20, \infty\}$ (15 runs each). The Bayesian
advantage \emph{does not disappear at $L_{\max} = \infty$}: the gap
persists with 15/15 directional consistency for all metrics at every
clamping level. Cell accuracy
$\Delta$ is $+0.0013$ at all four values; boundary sharpness $\Delta$
ranges from $+0.007$ ($L_{\max} = 5$) to $+0.010$ ($L_{\max} = \infty$).
This rules out clamping asymmetry as the primary mechanism; the advantage
is intrinsic to Dempster's multiplicative combination with conflict
normalization (full results in Supplementary Material, Section~S4).

\subsection{Yager's Rule Comparison}
\label{sec:yager-results}

Yager's rule~\cite{yager1987dempster} assigns conflict mass to ignorance
rather than normalizing it away. In simulation, Yager's rule performs worse than
Dempster on boundary sharpness ($-24\%$ to $-31\%$) because
conflict-to-ignorance transfer preserves excessive boundary uncertainty.
On Intel Lab, Yager produces the lowest boundary sharpness of all four
arms. Neither tested belief function conflict handling strategy matches the
log-odds accumulator. Additional detail in Supplementary Material.

\subsection{DS Regularization Control}
\label{sec:ds-regularization}

A minimum ignorance mass floor $\mOFmin \in \{0.001, 0.01, 0.05\}$
does not close the gap: on cell accuracy the gap narrows modestly, but on
boundary sharpness it \emph{widens} ($-116\%$ at $\mOFmin = 0.05$)
because enforcing residual ignorance prevents decisive boundary
classifications. Additional detail in Supplementary Material.

\subsection{Sensor Parameter Sensitivity}
\label{sec:sensor-sensitivity}

The Bayesian advantage is consistent across four sensor parameterizations
(weak, default, strong, symmetric), with pignistic-matched masses in each
case. The advantage is smallest for the symmetric sensor
($|\locc| = |\lfree|$). Additional detail in Supplementary Material.

\subsection{Downstream Task Evaluation}
\label{sec:results-downstream}

To characterize practical significance beyond point-probability metrics,
we evaluate A* path-planning clearance on maps produced by both fusion
arms: 500 random (start, goal) pairs (seed~0) on a multi-robot
simulation grid ($200 \times 200$ cells). Both arms achieve identical
planning success rates (99.8\%). The median clearance difference is
0.0~cells; the mean is 0.62~cells (below the pre-specified 1-cell =
$0.1$\,m threshold). Of 499 pairs where both arms found paths, 91\%
have clearance differences below 1~cell; the remaining cases occur at
sparsely observed boundary cells. Path equivalence rate (identical cell
sequences) is 76\%, confirming that Bayesian maps produce marginally
shorter paths through clearer corridors, but the differences are below
the practical significance threshold for path-planning safety at
$0.1$\,m grid resolution.

This evaluation directly addresses the limitation noted in
\cref{sec:recommendations}: the small absolute differences
($0.001$--$0.022$ on $[0,1]$ scales) do not translate to meaningful
path-planning degradation.


\begin{figure}[htbp]
\centering
\includegraphics[width=\textwidth]{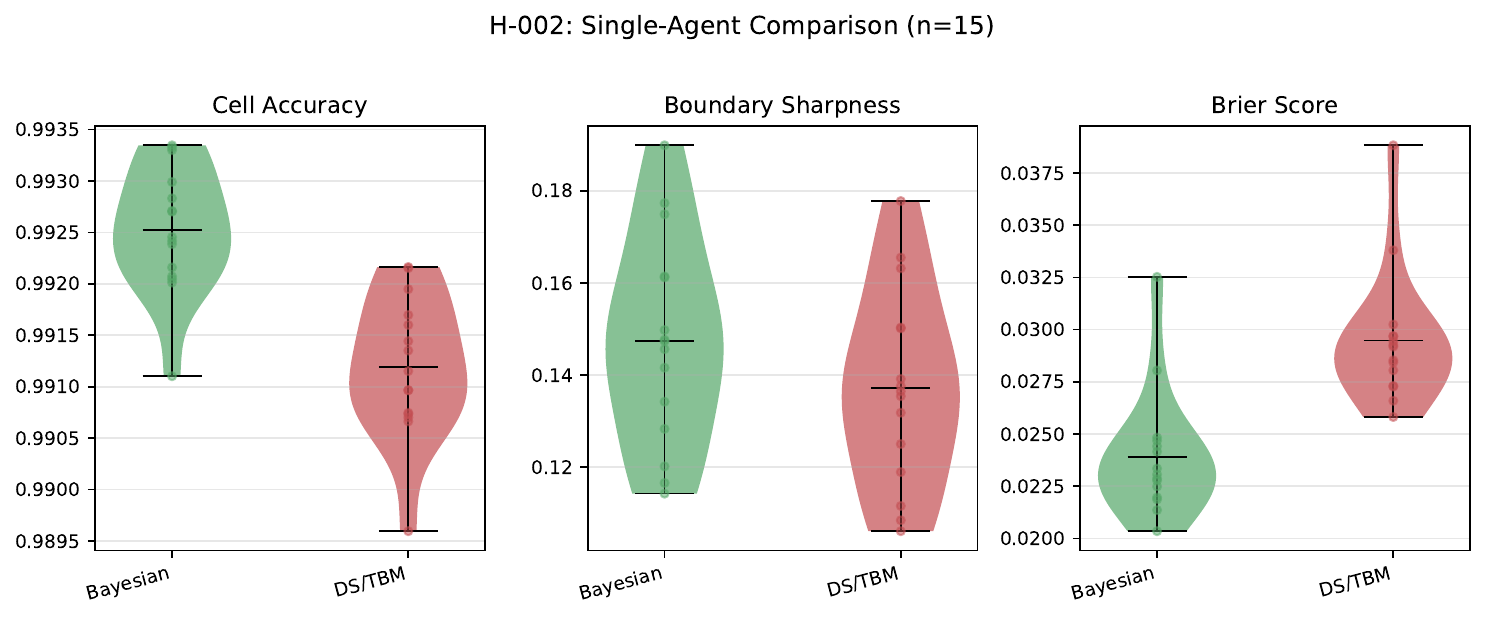}
\caption{Single-agent comparison: violin plots of Bayesian vs.\ belief function fusion across 15 independent runs for cell accuracy, boundary sharpness, and Brier score. Bayesian is consistently better on all metrics (15/15 directional consistency).}
\label{fig:h002-violins}
\end{figure}

\begin{figure}[htbp]
\centering
\includegraphics[width=\textwidth]{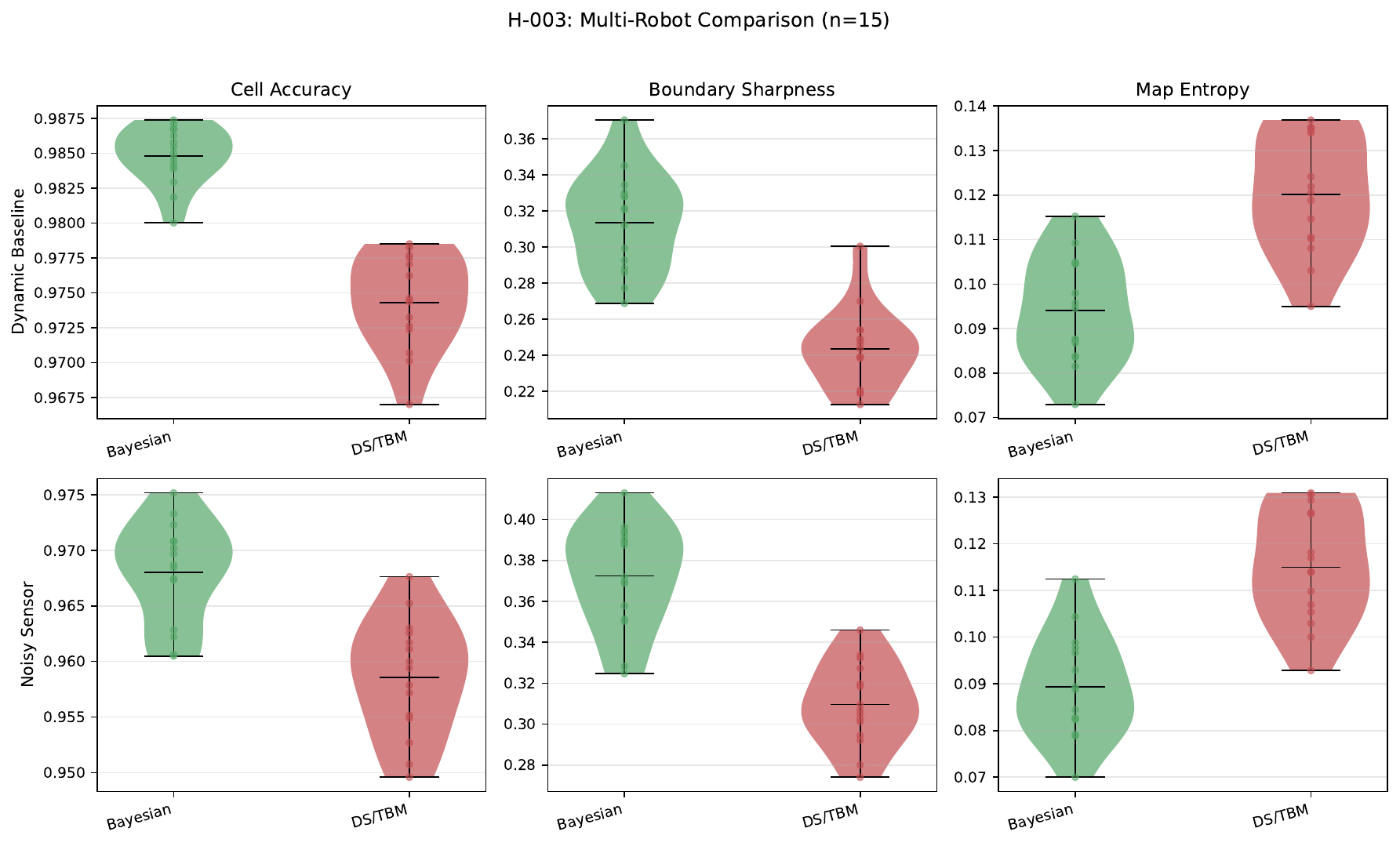}
\caption{Multi-robot comparison: violin plots for dynamic-baseline and noisy-sensor conditions across 15 independent runs. Bayesian outperforms belief functions on all metrics with 15/15 directional consistency. Larger separation is visible for boundary sharpness and map entropy.}
\label{fig:h003-violins}
\end{figure}

\begin{figure}[htbp]
\centering
\includegraphics[width=0.8\textwidth]{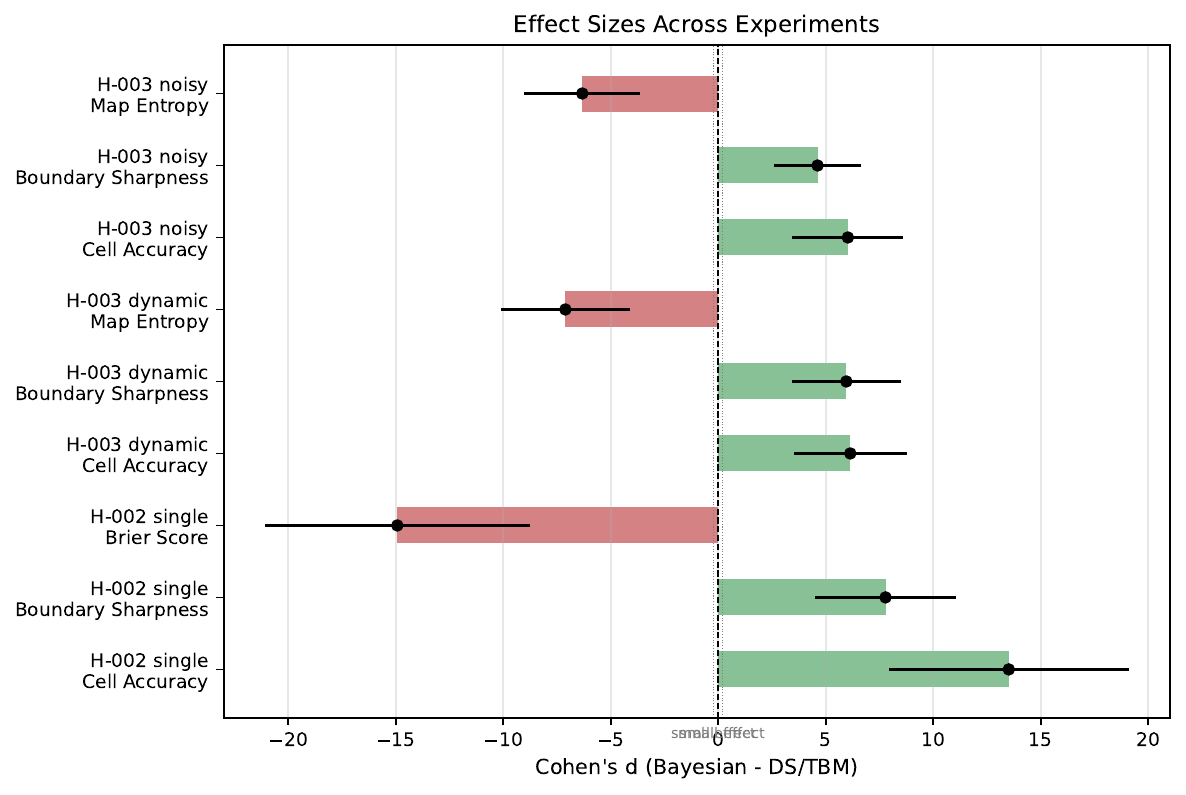}
\caption{Forest plot of Cohen's $d$ (Bayesian $-$ belief functions) with 95\% Hedges--Olkin CIs across all nine simulation comparisons. All CIs exclude zero and favor Bayesian. The direction is consistent across all experiments and metrics.}
\label{fig:forest-plot}
\end{figure}

\begin{figure}[htbp]
\centering
\includegraphics[width=0.8\textwidth]{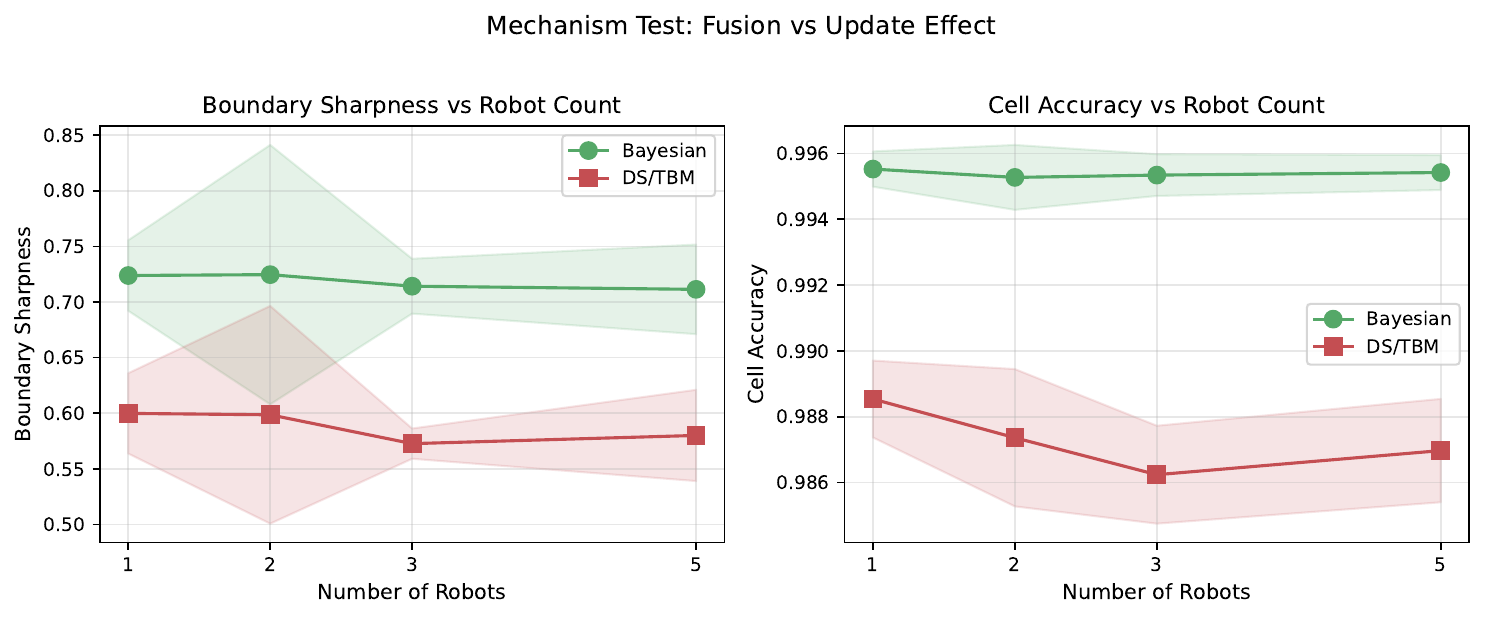}
\caption{Mechanism test: boundary sharpness and cell accuracy as a function of robot count (1, 2, 3, 5) under fair comparison. Shaded bands show $\pm 1$ SD across 15 runs. The Bayesian advantage is stable across robot counts with no crossover, confirming the sensor model confound hypothesis.}
\label{fig:mechanism-test}
\end{figure}

\begin{figure}[htbp]
\centering
\includegraphics[width=0.8\textwidth]{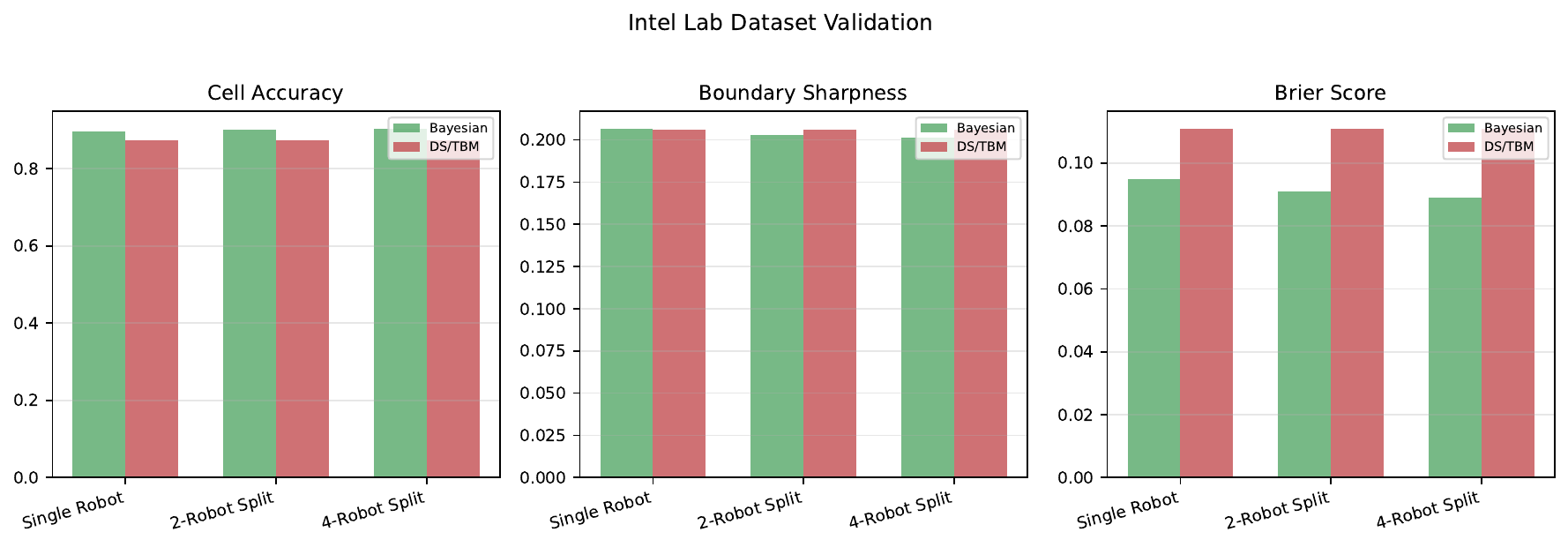}
\caption{Intel Research Lab real-data validation: Bayesian vs.\ Dempster's rule for 1-source, 2-way split, and 4-way split configurations. The direction is consistent with simulation results across all metrics and configurations.}
\label{fig:intel-lab}
\end{figure}

\begin{figure}[htbp]
\centering
\includegraphics[width=0.8\textwidth]{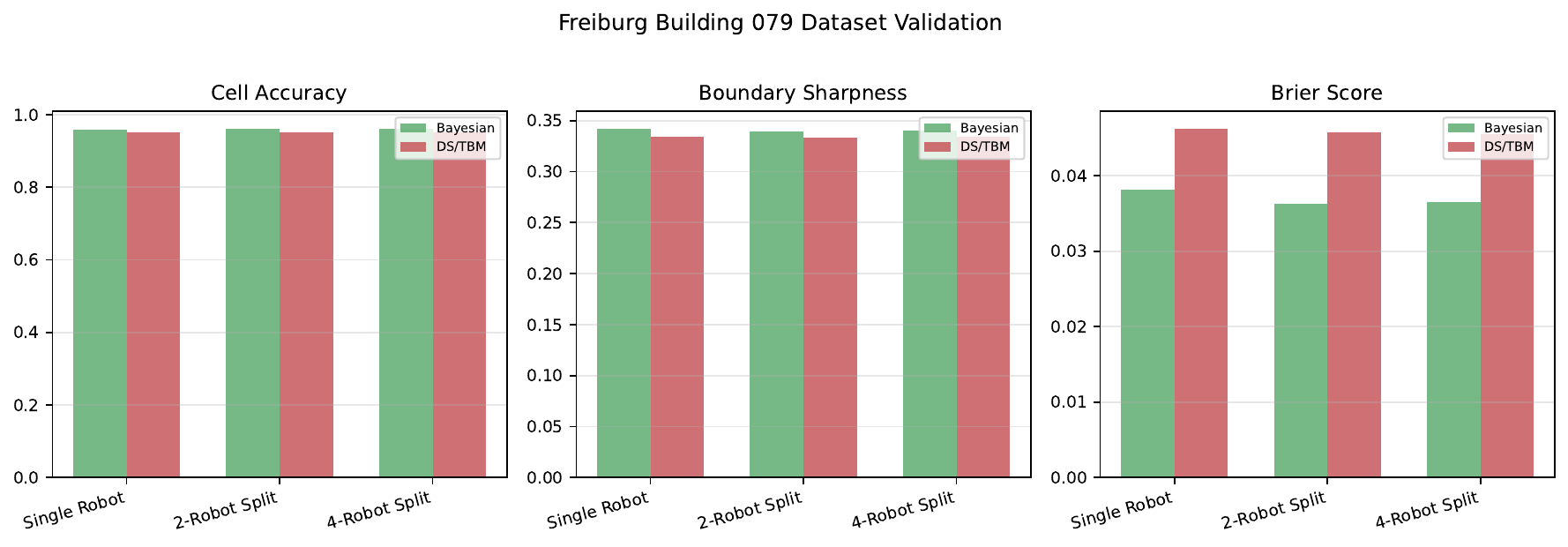}
\caption{Freiburg Building~079 real-data validation: Bayesian vs.\ Dempster's rule for 1-source, 2-way split, and 4-way split configurations. Results confirm the Intel Lab findings on an independent dataset.}
\label{fig:freiburg}
\end{figure}

%% file: table1_tost.tex
\begin{table}[ht]
\centering
\caption{TOST equivalence testing results. ``Equiv.''\ indicates whether the difference falls within the equivalence margin $\delta$.}
\label{tab:tost}
\begin{tabular}{llcrcc}
\toprule
Experiment & Metric & $\delta$ & $p$ & 90\% CI & Equiv.\ \\
\midrule
  H-002 (single) & Cell Accuracy & 0.02 & $<10^{-10}$ & $[+0.0013,\;+0.0014]$ & \checkmark \\
   & Boundary Sharpness & 0.03 & $<10^{-10}$ & $[+0.0096,\;+0.0108]$ & \checkmark \\
   & Brier Score & 0.01 & $<10^{-10}$ & $[-0.0057,\;-0.0054]$ & \checkmark \\
\midrule
  H-003 (dyn.) & Cell Accuracy & 0.02 & $<10^{-10}$ & $[+0.0097,\;+0.0113]$ & \checkmark \\
   & Boundary Sharpness & 0.03 & 1.0000 & $[+0.0647,\;+0.0754]$ & \texttimes \\
   & Map Entropy & 0.02 & 1.0000 & $[-0.0278,\;-0.0245]$ & \texttimes \\
   & Brier Score$^\dagger$ & --- & --- & --- & --- \\
\midrule
  H-003 (noisy) & Cell Accuracy & 0.02 & $<10^{-10}$ & $[+0.0087,\;+0.0102]$ & \checkmark \\
   & Boundary Sharpness & 0.03 & 1.0000 & $[+0.0566,\;+0.0690]$ & \texttimes \\
   & Map Entropy & 0.02 & 0.9999 & $[-0.0274,\;-0.0237]$ & \texttimes \\
   & Brier Score$^\dagger$ & --- & --- & --- & --- \\
\bottomrule
\end{tabular}
\smallskip

{\footnotesize 90\% CIs are standard for TOST: a 90\% CI for the parameter
of interest corresponds to two one-sided tests at $\alpha = 0.05$.\\
$^\dagger$Brier score requires per-cell ground truth; the multi-robot
experiment uses map entropy as the accuracy proxy and does not produce
per-cell Brier scores.}
\end{table}

%% file: table2_effect_sizes.tex
\begin{table}[ht]
\centering
\caption{Effect sizes (Cohen's $d$) with 95\% confidence intervals for Bayesian vs.\ Dempster's rule. $d = \text{mean}(\text{B} - \text{D}) / \text{SD}$. Sign interpretation depends on metric polarity; see Direction column.}
\label{tab:effect-sizes}
\begin{tabular}{llrcc}
\toprule
Experiment & Metric & $d$ & 95\% CI & Direction \\
\midrule
  H-002 (single) & Cell Accuracy & $+13.51$ & $[+8.01,\;+19.02]$ & B $\geq$ D \\
   & Boundary Sharpness & $+7.78$ & $[+4.58,\;+10.98]$ & B $\geq$ D \\
   & Brier Score & $-14.93$ & $[-21.00,\;-8.85]$ & B $\leq$ D \\
\midrule
  H-003 (dyn.) & Cell Accuracy & $+6.14$ & $[+3.59,\;+8.69]$ & B $\geq$ D \\
   & Boundary Sharpness & $+5.96$ & $[+3.48,\;+8.44]$ & B $\geq$ D \\
   & Map Entropy & $-7.11$ & $[-10.04,\;-4.17]$ & B $\leq$ D \\
\midrule
  H-003 (noisy) & Cell Accuracy & $+6.03$ & $[+3.52,\;+8.53]$ & B $\geq$ D \\
   & Boundary Sharpness & $+4.62$ & $[+2.67,\;+6.57]$ & B $\geq$ D \\
   & Map Entropy & $-6.32$ & $[-8.94,\;-3.70]$ & B $\leq$ D \\
\bottomrule
\end{tabular}
\smallskip

{\footnotesize CIs computed via Hedges--Olkin (1985) approximation.
For $|d| > 5$, exact non-central~$t$ CIs would be asymmetric but
qualitative conclusions (massive effect) are unchanged.}
\end{table}

%% file: table3_cross_condition.tex
\begin{table}[ht]
\centering
\caption{Cross-condition summary: mean difference (Bayesian $-$ Dempster) and TOST verdict. Higher is better for Cell Accuracy and Boundary Sharpness; lower is better for Brier Score and Map Entropy.}
\label{tab:cross-condition}
\begin{tabular}{llrrcc}
\toprule
Experiment & Metric & $\bar{x}_{\mathrm{B}}$ & $\bar{x}_{\mathrm{D}}$ & $\Delta$ & Equiv.\ \\
\midrule
  H-002 & Cell Accuracy & $0.9925$ & $0.9912$ & $+0.0013$ & \checkmark \\
   & Boundary Sharpness & $0.1474$ & $0.1372$ & $+0.0102$ & \checkmark \\
   & Brier Score & $0.0239$ & $0.0295$ & $-0.0056$ & \checkmark \\
\midrule
  H-003 dyn. & Cell Accuracy & $0.9848$ & $0.9743$ & $+0.0105$ & \checkmark \\
   & Boundary Sharpness & $0.3135$ & $0.2435$ & $+0.0701$ & \texttimes \\
   & Map Entropy & $0.0940$ & $0.1202$ & $-0.0261$ & \texttimes \\
   & Brier Score$^\dagger$ & --- & --- & --- & --- \\
\midrule
  H-003 noisy & Cell Accuracy & $0.9680$ & $0.9586$ & $+0.0094$ & \checkmark \\
   & Boundary Sharpness & $0.3724$ & $0.3096$ & $+0.0628$ & \texttimes \\
   & Map Entropy & $0.0894$ & $0.1150$ & $-0.0256$ & \texttimes \\
   & Brier Score$^\dagger$ & --- & --- & --- & --- \\
\bottomrule
\end{tabular}
\smallskip

{\footnotesize $^\dagger$Brier score requires per-cell ground truth; the multi-robot
experiment uses map entropy as the accuracy proxy and does not produce
per-cell Brier scores.}
\end{table}

%% file: table4_intel_lab.tex
\begin{table}[ht]
\centering
\caption{Intel Research Lab dataset validation. Bayesian vs.\ Dempster's rule on real 2D LiDAR data. $\Delta$ = Bayesian $-$ Dempster. Higher is better for Cell Accuracy and Boundary Sharpness; lower is better for Brier Score.}
\label{tab:intel-lab}
\begin{tabular}{llrrrc}
\toprule
Setup & Metric & Bayesian & Dempster & $\Delta$ & 95\% CI \\
\midrule
  1-source & Cell Accuracy & $0.8971$ & $0.8729$ & $+0.0176$ & $[+0.0137,\;+0.0217]$ \\
   & Boundary Sharpness & $0.2067$ & $0.2061$ & $+0.0006$ & $[-0.0068,\;+0.0074]$ \\
   & Brier Score & $0.0949$ & $0.1110$ & $-0.0128$ & $[-0.0158,\;-0.0098]$ \\
\midrule
  2-way split & Cell Accuracy & $0.9015$ & $0.8729$ & $+0.0212$ & $[+0.0174,\;+0.0252]$ \\
   & Boundary Sharpness & $0.2027$ & $0.2061$ & $-0.0034$ & $[-0.0103,\;+0.0032]$ \\
   & Brier Score & $0.0909$ & $0.1110$ & $-0.0160$ & $[-0.0189,\;-0.0131]$ \\
\midrule
  4-way split & Cell Accuracy & $0.9036$ & $0.8729$ & $+0.0224$ & $[+0.0189,\;+0.0262]$ \\
   & Boundary Sharpness & $0.2013$ & $0.2061$ & $-0.0048$ & $[-0.0125,\;+0.0023]$ \\
   & Brier Score & $0.0891$ & $0.1110$ & $-0.0170$ & $[-0.0199,\;-0.0143]$ \\
\bottomrule
\end{tabular}
\smallskip

{\footnotesize Column values computed over arm-specific evaluation sets;
$\Delta$ and CIs computed over the paired-cell intersection of both arms'
observed regions within the ground truth mask.
95\% CIs from spatial block bootstrap ($B = 10$ cells, $1.0$\,m blocks, 10\,000 iterations).
Dempster values are identical across scan-split configurations due to
Dempster's rule associativity (Section~\ref{sec:clamping-tradeoff}).}
\end{table}

%% file: table5_freiburg.tex
\begin{table}[ht]
\centering
\caption{Freiburg Building~079 dataset validation. Bayesian vs.\ Dempster's rule on real 2D LiDAR data (SICK LMS200, 360 beams). $\Delta$ = Bayesian $-$ Dempster. Higher is better for Cell Accuracy and Boundary Sharpness; lower is better for Brier Score.}
\label{tab:freiburg}
\begin{tabular}{llrrrc}
\toprule
Setup & Metric & Bayesian & Dempster & $\Delta$ & 95\% CI \\
\midrule
  1-source & Cell Accuracy & $0.9599$ & $0.9512$ & $+0.0071$ & $[+0.0051,\;+0.0092]$ \\
   & Boundary Sharpness & $0.3421$ & $0.3341$ & $+0.0114$ & $[+0.0054,\;+0.0176]$ \\
   & Brier Score & $0.0382$ & $0.0463$ & $-0.0072$ & $[-0.0092,\;-0.0054]$ \\
\midrule
  2-way split & Cell Accuracy & $0.9618$ & $0.9518$ & $+0.0071$ & $[+0.0052,\;+0.0092]$ \\
   & Boundary Sharpness & $0.3396$ & $0.3331$ & $+0.0105$ & $[+0.0046,\;+0.0164]$ \\
   & Brier Score & $0.0363$ & $0.0457$ & $-0.0073$ & $[-0.0092,\;-0.0056]$ \\
\midrule
  4-way split & Cell Accuracy & $0.9617$ & $0.9519$ & $+0.0070$ & $[+0.0052,\;+0.0090]$ \\
   & Boundary Sharpness & $0.3401$ & $0.3340$ & $+0.0099$ & $[+0.0041,\;+0.0158]$ \\
   & Brier Score & $0.0366$ & $0.0456$ & $-0.0071$ & $[-0.0088,\;-0.0054]$ \\
\bottomrule
\end{tabular}
\smallskip

{\footnotesize Column values computed over arm-specific evaluation sets;
$\Delta$ and CIs computed over the paired-cell intersection of both arms'
observed regions within the ground truth mask.
95\% CIs from spatial block bootstrap ($B = 10$ cells, $1.0$\,m blocks, 10\,000 iterations).}
\end{table}

%% file: discussion.tex

\section{Discussion}
\label{sec:discussion}

\subsection{The Combination Rule Effect}
\label{sec:clamping-tradeoff}

The experimental results show a consistent Bayesian advantage across
all conditions under fair comparison. The $L_{\max}$ ablation
(\cref{sec:lmax-ablation}) reveals that this advantage
\emph{persists at $L_{\max} = \infty$}---removing the Bayesian clamp
entirely does not eliminate the gap. In fact, the advantage increases
slightly as $L_{\max}$ grows ($+0.3\%$
boundary sharpness from $L_{\max} = 5$ to $L_{\max} = \infty$;
cell accuracy $\Delta$ is invariant). This
demonstrates that the Bayesian advantage is \emph{not} primarily a
regularization effect: it arises from the specific interaction between
Dempster's multiplicative mass intersection and the $1/(1-K)$ conflict
normalization at boundary cells where observations disagree.

\paragraph{Mechanism}
At $N = 1$, pignistic matching ensures identical decision probabilities.
Under accumulation, the two rules diverge at boundary cells where
observations conflict. The Bayesian log-odds accumulator is additive and
order-independent; Dempster's multiplicative combination with $1/(1-K)$
normalization converges more slowly to certainty at mixed-observation
cells, producing less decisive beliefs after the same number of
updates. This slower geometric convergence is related to the well-known
sensitivity of Dempster's rule under high
conflict~\cite{zadeh1979note}. For interior cells ($N \gg 5$,
consistent observations), Dempster--Shafer produces marginally higher
confidence ($\BetP \to 1.0$ vs.\ $p = 0.999955$): the Bayesian arm
clamps at $L_{\max}$ (saturation at $N \approx 5$ for $L_{\max} = 10$),
while the DS trajectory continues converging and overtakes the clamped
value at $N \approx 7$ (Supplementary Material, Table~S3). This
difference does not affect binary classification. On aggregate, Bayesian fusion
outperforms because boundary cells are where classification errors
concentrate ($+0.1\%$ to $+2.2\%$ cell accuracy, $+7\%$ to $+29\%$
boundary sharpness across all conditions).

\paragraph{$L_{\max}$ ablation confirms the mechanism}
At $L_{\max} = \infty$, the Bayesian advantage persists with 15/15
directional consistency (\cref{sec:lmax-ablation}), ruling out clamping
as the primary cause. The advantage is intrinsic to Dempster's conflict
normalization, not an abstract algebraic property of additive versus
multiplicative rules.

\paragraph{DS regularization does not close the gap}
Adding $\mOFmin$ regularization does not reduce the gap; for boundary
sharpness, it doubles the disadvantage (\cref{sec:ds-regularization}).

\paragraph{Yager's rule comparison}
Yager's rule eliminates Dempster's $1/(1-K)$ normalization but retains
the multiplicative mass intersection structure. It performs worse on
boundary sharpness ($-24\%$ to $-31\%$) because conflict-to-ignorance
transfer preserves excessive uncertainty, confirming that neither
tested belief function conflict handling strategy matches the log-odds
accumulator. Both tested rules share the conjunctive intersection
structure; rules based on different structural operations---Dubois--Prade
(disjunctive combination for conflicting elements), PCR6 (proportional
redistribution), open-world TBM---remain untested and may behave
differently. The identified mechanism---slower convergence due to
Dempster's $1/(1-K)$ conflict normalization---is specific to
Dempster's rule; combination rules designed for the high-conflict
regime (PCR6, Dubois--Prade) may not exhibit this behavior and remain
untested.

\paragraph{Trajectory length and conflict regime}
The combination rule effect stabilizes with trajectory length, as both
methods plateau for well-observed cells. The experiments probe a
moderate-conflict regime ($K = 0.05$--$0.35$ at simulation boundary
cells). Empirical validation confirms a comparable regime: among boundary
cells with non-zero conflict at the 2-way split fusion step (35\% of Intel Lab
and 13\% of Freiburg boundary cells), the conditional median is $K = 0.33$
(Intel Lab) and $K = 0.24$ (Freiburg), with conditional IQRs of
$[0.02, 0.73]$ and $[0.003, 0.92]$ respectively. The real-data distribution
extends into the high-conflict regime ($K > 0.9$ at approximately 10\% of
conflicted cells), beyond the simulation range. Dempster's rule is associative---per-cell DS values are identical
across scan-splitting configurations; column means vary by up to 0.001
due to evaluation-set boundary effects at cells near the classification
threshold---while Bayesian clamping violates associativity, explaining
cross-split variation in the Bayesian column.

\subsection{Resolving the Sensor Model Confound}
\label{sec:confound}

Prior work has reported Dempster--Shafer advantages for occupancy grid
mapping~\cite{moras2011moving}. Our results indicate that these reported
Dempster--Shafer advantages are explained, at least in part, by unmatched sensor model
parameterization. The pre-matching DS masses were
$m_{\mathrm{occ}} = (0.75, 0.10, 0.15)$ and
$m_{\mathrm{free}} = (0.10, 0.75, 0.15)$, yielding $\BetP(O) = 0.825$
versus $\sigma(\locc) = 0.881$ for Bayesian---a moderate mismatch
representative of independent parameterization from community defaults.
This unmatched parameterization reversed boundary
sharpness from $+12\%$ (belief functions) to $-22\%$ (dynamic
baseline) to $-29\%$ (noisy sensor) in favor of Bayesian---a
complete reversal
(\cref{sec:equivalence-problem}). The mechanism test confirms
stability across robot counts with no crossover. This finding does not
imply prior errors; the confound was not previously recognized. We
restrict this claim to our own experiment; we did not reproduce prior
parameterizations. Our pignistic matching methodology provides a
systematic solution for future comparisons.

\subsection{Practical Recommendations}
\label{sec:recommendations}

Based on the consistent results across simulation and real-data
conditions, we offer the following recommendations for practitioners
working with 2D binary occupancy grids, lidar-type sensors, and
Dempster's combination rule. These recommendations do not extend to
alternative combination rules or non-binary frames.

\paragraph{Default recommendation}
For standard 2D binary occupancy grids with Dempster's rule, we
recommend Bayesian log-odds fusion on three grounds, in order of
practical importance:
\begin{enumerate}
  \item \textbf{Point-probability accuracy.} Under fair comparison,
    Bayesian fusion achieves equal or better cell accuracy, boundary
    sharpness, and Brier score across all simulation conditions (15/15
    directional consistency, mean differences of $0.001$--$0.011$ cell
    accuracy on $[0,1]$ scales) and on both real datasets for cell
    accuracy and Brier score; boundary sharpness differences on Intel
    Lab are not statistically significant (all 95\% CIs include zero),
    attributed to clamping artifacts. The advantage is small
    in absolute terms. A downstream A* path-planning evaluation
    (\cref{sec:results-downstream}) confirms that the two methods
    produce functionally identical navigation outcomes (99.8\% shared
    reachability, 76\% path equivalence, 0.62-cell mean length
    difference).
  \item \textbf{Communication cost.} In multi-robot scenarios, each cell
    requires two channels for Bayesian fusion ($L$, $n$) versus three
    for belief functions ($\mO$, $\mF$, $\mOF$)---a $50\%$ overhead
    relevant for bandwidth-limited systems.
\end{enumerate}
Bayesian fusion also offers computational simplicity: one addition and
one clamp per cell per observation, versus three multiplications and a
normalization for Dempster's rule.
This recommendation applies to point-probability metrics only. The
belief function framework's interval-valued representation
$[\mathrm{Bel}(A), \mathrm{Pl}(A)]$ provides capabilities not
evaluated here (see \cref{sec:limitations}).

\paragraph{When belief functions may still apply}
Belief functions offer genuine capabilities outside the binary
occupancy grid scope: multi-hypothesis frames for semantic mapping,
interval-valued $[\mathrm{Bel}(A), \mathrm{Pl}(A)]$ bounds for
safety-critical planning, and open-world reasoning where new hypotheses
may emerge~\cite{smets1994transferable}.

\paragraph{Sensor calibration}
Sensor model calibration matters more than fusion rule selection: the
confound produced a $34\%$ boundary sharpness swing. Real-data experiments
use uncalibrated defaults; pignistic matching ensures per-observation
fairness regardless (see Supplementary Material, Section~S4).

\subsection{Limitations and Scope}
\label{sec:limitations}

\paragraph{Scope boundaries}
All experiments use 2D binary occupancy grids; extension to 3D is
future work. Only consonant observation BBAs are tested (standard for
single-source inverse sensor models); non-consonant BBAs from
multi-source fusion may behave differently. Only Dempster's and Yager's
rules are tested; alternative rules (Dubois--Prade, PCR6, open-world
TBM) remain untested. Real-data scan-split conditions use ideal
alignment without communication constraints; simulation multi-robot
conditions use PGO alignment but no communication constraints. Both real datasets are indoor lidar
(SICK~LMS200); outdoor and 3D generalization is not claimed.

\paragraph{Point-probability evaluation only}
All metrics operate on point probabilities ($\BetP$ or $\sigma(L)$).
The belief function interval $[\mathrm{Bel}(A), \mathrm{Pl}(A)]$---the
framework's \emph{primary} differentiator and arguably its central
feature---is not evaluated. The interval width
$\mathrm{Pl}(O) - \mathrm{Bel}(O) = \mOF$ carries information absent
from the Bayesian representation and may offer safety guarantees for
risk-aware planning. Our conclusions are restricted to
point-probability accuracy.

\paragraph{Pignistic transform as design choice}
The pignistic transform $\BetP$ follows TBM
convention~\cite{smets2005decision}; the normalized plausibility transform
$P_{\mathrm{Pl}}$ yields different matched masses (e.g., $\BetP(O) = 0.88$
vs.\ $P_{\mathrm{Pl}}(O) = 0.81$ for $\mOF = 0.24$). The Bayesian
advantage direction is expected to persist because the
mechanism---Dempster's $1/(1-K)$ conflict normalization---operates
independently of the matching criterion. A single-agent sensitivity
analysis (\cref{sec:results-ppl}) confirms this: $P_{\mathrm{Pl}}$
matching reverses the boundary sharpness sign ($\Delta = -0.002$,
favoring DS) but leaves cell accuracy unchanged and increases the
Brier score gap ($\Delta = +0.008$, favoring Bayesian). Extension to
multi-robot conditions is untested.

\paragraph{Evaluation coverage}
Ground truth covers $\sim$13--14\% of grid cells, excluding frontier
cells where $\mOF$ is highest and the belief--plausibility interval
widest. Since the frontier is where belief functions provide their most
distinctive information, this exclusion may underestimate DS utility in
the exploration regime.

\paragraph{Statistical caveats}
TOST margins are not grounded in formal downstream task performance
thresholds; we mitigate this with sensitivity analysis.

%% file: conclusion.tex

\section{Conclusion}
\label{sec:conclusion}

This paper makes three primary contributions. First, we establish a
pignistic-transform-based methodology for matching per-observation
decision probabilities across Bayesian and belief function frameworks,
enabling controlled comparison that isolates the fusion rule from both
the sensor model parameterization and the regularization strategy.
This methodology is reusable by the community for any future
comparison of Bayesian and belief function fusion. Second, we
demonstrate that the sensor model equivalence confound is large in
practice: unmatched parameterization reversed the boundary sharpness
comparison from $+12\%$ in favor of belief functions to $-22\%$ to
$-29\%$ in favor of Bayesian log-odds in our multi-robot experiment.
This confound may explain apparent discrepancies in prior literature
where belief function advantages were reported under unmatched
parameterizations. Third, a sensitivity analysis under the normalized
plausibility transform $P_{\mathrm{Pl}}$ shows that while boundary
sharpness reverses ($\Delta = -0.002$, favoring DS), the Brier score
gap widens ($\Delta = +0.008$, favoring Bayesian) and cell accuracy is
unchanged---confirming that the overall finding is robust to the choice
of probability transform.

These contributions rest on theoretical foundations established here:
a formal bijection between scalar Bayesian log-odds $L$ and consonant
belief function masses $(\mO, \mF, \mOF)$ at the single-observation
level, and a proof that the TBM ignorance mass $\mOF$ decays
monotonically in magnitude under any non-degenerate observation.

Under fair comparison, Bayesian log-odds fusion produces statistically
better point probabilities than Dempster's combination rule across all
simulation conditions (15/15 directional consistency per metric, $p =
3.1 \times 10^{-5}$) and on both real datasets for cell accuracy and
Brier score; boundary sharpness differences on Intel Lab are not
statistically significant (all 95\% CIs include zero), consistent
with the $L_{\max}$ ablation attributing boundary differences to
clamping artifacts.
The absolute differences ($0.001$--$0.011$ cell accuracy) are
practically small. A downstream A* evaluation confirms functionally
identical navigation outcomes: 99.8\% shared reachability, 76\% path
equivalence, and 0.62-cell mean length difference across 500 random
start--goal pairs (\cref{sec:results-downstream}). The $L_{\max}$ ablation
($L_{\max} \in \{5, 10, 20, \infty\}$) confirms the advantage arises
from Dempster's $1/(1-K)$ conflict normalization and slower geometric
convergence at boundary cells, not from regularization. The primary
recommendation is that there is no accuracy-based justification for
preferring Dempster's rule over Bayesian log-odds for 2D binary
occupancy grids. We emphasize that all metrics operate on point
probabilities; the interval-valued representation
$[\mathrm{Bel}(A), \mathrm{Pl}(A)]$ may offer advantages in
safety-critical applications requiring pessimistic estimates.

The primary methodological contribution---pignistic transform matching
for controlled fusion comparison---is independent of the specific
accuracy outcome: it provides the community with a reusable framework
for any future comparison of Bayesian and belief function fusion where
the sensor model is not a fixed shared component. Whether future
experiments with PCR6, 3D grids, or interval-valued metrics overturn
the accuracy finding, the methodology remains valid and necessary.

The identified mechanism is specific to Dempster's conflict
normalization; rules designed for the high-conflict regime (PCR6,
Dubois--Prade) may not exhibit this behavior and remain untested.
Future work should validate these results on 3D occupancy grids,
test these alternative combination rules alongside open-world TBM,
and extend to non-consonant observation masses and multi-hypothesis
frames where belief functions' capacity for partial support may offer
genuine advantages. Evaluating interval-valued metrics and calibration
assessment via Expected Calibration Error and reliability diagrams
would complement the point-probability assessment presented here.

\section*{Data Availability}
\label{sec:data-availability}

All experiment code, sensor model implementations, and statistical
analysis scripts are available at
\url{[URL upon acceptance]}. The
repository includes configuration files for reproducing all experiments
reported in this paper. The Intel Research Lab dataset is publicly
available at the Radish repository~\cite{howard2003intel}. The Freiburg
Building~079 dataset is publicly available from the University of
Freiburg. All simulation experiments use fixed random seeds (42--56) and
produce deterministic results given the same configuration.

%% file: appendix.tex

\section*{Supplementary Material}

Supplementary material associated with this article contains:
statistical methodology details (TOST, effect sizes, multiplicity
correction, spatial block bootstrap, Bayes factors, power analysis,
CI verification, block-size sensitivity), supporting proofs,
convergence analysis, ablation studies ($L_{\max}$, Yager's rule,
DS regularization, sensor parameter sensitivity), and detailed
prior-work comparison.

%% file: table_ci_verification.tex
\begin{table}[ht]
\centering
\caption{Cohen's $d$ CI verification: Hedges--Olkin (1985) approximation vs.\ exact non-central $t$ inversion (Steiger, 2004). Maximum discrepancy is 0.59, confirming qualitative conclusions are unaffected.}
\label{tab:ci-verification}
\begin{tabular}{llrccc}
\toprule
Experiment & Metric & $d$ & HO 95\% CI & NCT 95\% CI & Disc. \\
\midrule
  Single-agent & Cell Accuracy & $+13.51$ & $[+8.01,\;+19.02]$ & $[+8.54,\;+18.48]$ & 0.54 \\
   & Boundary Sharpness & $+7.78$ & $[+4.58,\;+10.98]$ & $[+4.88,\;+10.67]$ & 0.31 \\
   & Brier Score & $-14.93$ & $[-21.00,\;-8.85]$ & $[-20.41,\;-9.44]$ & 0.59 \\
  Multi-robot (dyn.) & Cell Accuracy & $+6.14$ & $[+3.59,\;+8.69]$ & $[+3.83,\;+8.44]$ & 0.25 \\
   & Boundary Sharpness & $+5.96$ & $[+3.48,\;+8.44]$ & $[+3.71,\;+8.19]$ & 0.24 \\
   & Map Entropy & $-7.11$ & $[-10.04,\;-4.17]$ & $[-9.75,\;-4.45]$ & 0.29 \\
  Multi-robot (noisy) & Cell Accuracy & $+6.03$ & $[+3.52,\;+8.53]$ & $[+3.76,\;+8.29]$ & 0.25 \\
   & Boundary Sharpness & $+4.62$ & $[+2.67,\;+6.57]$ & $[+2.85,\;+6.38]$ & 0.19 \\
   & Map Entropy & $-6.32$ & $[-8.94,\;-3.70]$ & $[-8.69,\;-3.95]$ & 0.26 \\
\bottomrule
\end{tabular}
\end{table}

%% file: table_block_sensitivity.tex
\begin{table}[ht]
\centering
\footnotesize
\caption{Block size sensitivity for spatial block bootstrap CIs. Each cell shows the 95\% CI for the metric delta (Bayesian $-$ Dempster). All significance verdicts are stable across block sizes.}
\label{tab:block-sensitivity}
\begin{tabular}{lllccc}
\toprule
Dataset & Setup & Metric & $B{=}5$ (0.5\,m) & $B{=}10$ (1.0\,m) & $B{=}20$ (2.0\,m) \\
\midrule
  Intel Lab & 1-source & Cell Accuracy & $[+0.014,\;+0.021]$ & $[+0.014,\;+0.022]$ & $[+0.013,\;+0.023]$ \\
   &  & Boundary Sharpness & $[-0.006,\;+0.007]$ & $[-0.007,\;+0.007]$ & $[-0.007,\;+0.007]$ \\
   &  & Brier Score & $[-0.016,\;-0.010]$ & $[-0.016,\;-0.010]$ & $[-0.016,\;-0.009]$ \\
   & 2-way split & Cell Accuracy & $[+0.018,\;+0.024]$ & $[+0.017,\;+0.025]$ & $[+0.017,\;+0.026]$ \\
   &  & Boundary Sharpness & $[-0.010,\;+0.003]$ & $[-0.010,\;+0.003]$ & $[-0.011,\;+0.004]$ \\
   &  & Brier Score & $[-0.018,\;-0.014]$ & $[-0.019,\;-0.013]$ & $[-0.020,\;-0.013]$ \\
   & 4-way split & Cell Accuracy & $[+0.020,\;+0.025]$ & $[+0.019,\;+0.026]$ & $[+0.018,\;+0.027]$ \\
   &  & Boundary Sharpness & $[-0.012,\;+0.001]$ & $[-0.013,\;+0.002]$ & $[-0.013,\;+0.003]$ \\
   &  & Brier Score & $[-0.019,\;-0.015]$ & $[-0.020,\;-0.014]$ & $[-0.021,\;-0.014]$ \\
\midrule
  Freiburg~079 & 1-source & Cell Accuracy & $[+0.005,\;+0.009]$ & $[+0.005,\;+0.009]$ & $[+0.005,\;+0.009]$ \\
   &  & Boundary Sharpness & $[+0.005,\;+0.018]$ & $[+0.005,\;+0.018]$ & $[+0.005,\;+0.018]$ \\
   &  & Brier Score & $[-0.009,\;-0.006]$ & $[-0.009,\;-0.005]$ & $[-0.009,\;-0.005]$ \\
   & 2-way split & Cell Accuracy & $[+0.005,\;+0.009]$ & $[+0.005,\;+0.009]$ & $[+0.005,\;+0.010]$ \\
   &  & Boundary Sharpness & $[+0.005,\;+0.016]$ & $[+0.005,\;+0.016]$ & $[+0.005,\;+0.016]$ \\
   &  & Brier Score & $[-0.009,\;-0.006]$ & $[-0.009,\;-0.006]$ & $[-0.010,\;-0.005]$ \\
   & 4-way split & Cell Accuracy & $[+0.005,\;+0.009]$ & $[+0.005,\;+0.009]$ & $[+0.005,\;+0.010]$ \\
   &  & Boundary Sharpness & $[+0.004,\;+0.016]$ & $[+0.004,\;+0.016]$ & $[+0.004,\;+0.016]$ \\
   &  & Brier Score & $[-0.009,\;-0.006]$ & $[-0.009,\;-0.005]$ & $[-0.009,\;-0.005]$ \\
\bottomrule
\end{tabular}
\end{table}